\setlist[itemize]{noitemsep,left=0mm}
\setlist[enumerate]{noitemsep,left=0mm}
\newcommand*\iftodonotes{\if@todonotes@disabled\expandafter\@secondoftwo\else\expandafter\@firstoftwo\fi} 
\newcommand{\symbsubconcepts}{\ensuremath{\mathcal{C'}}}
\newcommand{\symbsubsubconcepts}{\ensuremath{\mathcal{C''}}}
\newcommand{\symbimages}{\ensuremath{\mathcal{I}}}
\newcommand{\symbsubimages}{\ensuremath{\mathcal{I'}}}
\newcommand{\symbsubsubimages}{\ensuremath{\mathcal{I''}}}
\newcommand{\symbsec}{\ensuremath{s}}
\newcommand{\symbsubsec}{\ensuremath{u}}
\newcommand{\symbphrase}{\ensuremath{t}}
\newcommand{\symbconcept}{\ensuremath{c}}
\newcommand{\symbimage}{\ensuremath{i}}
\newcommand{\simi}{\ensuremath{{\small \textsf{sim}}}}
\newcommand{\cov}{\ensuremath{{\small \textsf{cov}}}}
\newcommand{\coverage}{\ensuremath{C}}
\newcommand{\similarity}{\ensuremath{S}}
\newcommand{\redundancy}{\ensuremath{R}}
\newcommand{\globalf}{\ensuremath{G}}
\newcommand{\jointf}{\ensuremath{J}}
\newcommand{\hrefEmail}[2]{\href{mailto:#1}{\color{black}{#2}}}
\definecolor{ethblue}{rgb}{0,0.1,0.4}
\newcommand{\ethletter}{\hspace{-0.5mm}\text{
    \fontfamily{phv}\fontseries{bx}\fontsize{7}{\baselineskip}\selectfont
    \textit{\textbf{\color{ethblue}{E}}}}
}
\definecolor{gtyellow}{rgb}{0.8,0.7,0.01}
\newcommand{\gtletter}{\hspace{-0.5mm}\text{
    \fontfamily{phv}\fontseries{bx}\fontsize{7}{\baselineskip}\selectfont
    {\textbf{\color{gtyellow}{G}}}}
}
\newtheorem{theorem}{Theorem}[section]
\newtheorem{definition}[theorem]{Definition}
\newtheorem{observation}[theorem]{Observation}
\newcommand{\prooftext}[1]{\text{\textcolor{gray}{(#1)}}}
\begin{document}

\title{%
Enhancing Textbooks with Visuals from the Web for Improved Learning%
}

\author{\bf Janvijay Singh\thanks{ \,\, Work done during an internship at ETH Zürich}\, $^{\gtletter}$\qquad \bf Vilém Zouhar\,$^{\ethletter}$\qquad \bf Mrinmaya Sachan\,$^{\ethletter}$ \\
  $^{\gtletter}$\,School of Interactive Computing, Georgia Institute of Technology \\
  $^{\ethletter}$\,ETH Zürich, Department of Computer Science \\
  \texttt{\hrefEmail{iamjanvijay@gatech.edu}{iamjanvijay@gatech.edu}
  \quad
  \{\hrefEmail{vzouhar@ethz.ch}{vzouhar},\hrefEmail{msachan@ethz.ch}{msachan}\}@ethz.ch} \\}

\maketitle

\begin{abstract}
Textbooks are one of the main mediums for delivering high-quality education to students.
In particular, explanatory and illustrative visuals play a key role in retention, comprehension and general transfer of knowledge.
However, many textbooks lack these interesting visuals to support student learning.
In this paper, we investigate the effectiveness of vision-language models to automatically enhance textbooks with images from the web.
We collect a dataset of e-textbooks in the math, science, social science and business domains.
We then set up a text-image matching task that involves retrieving and appropriately assigning web images to textbooks, which we frame as a matching optimization problem.
Through a crowd-sourced evaluation, we verify that (1) while the original textbook images are rated higher, automatically assigned ones are not far behind, and (2) the precise formulation of the optimization problem matters.
We release the dataset of textbooks with an associated image bank to inspire further research in this intersectional area of computer vision and NLP for education.

\end{abstract}

\footnotetext[0]{Code \& data: \href{https://github.com/eth-nlped/textbook-enrichment}{github.com/eth-nlped/textbook-enrichment}}

\begin{figure}[htbp]
\centering
\includegraphics[width=\linewidth]{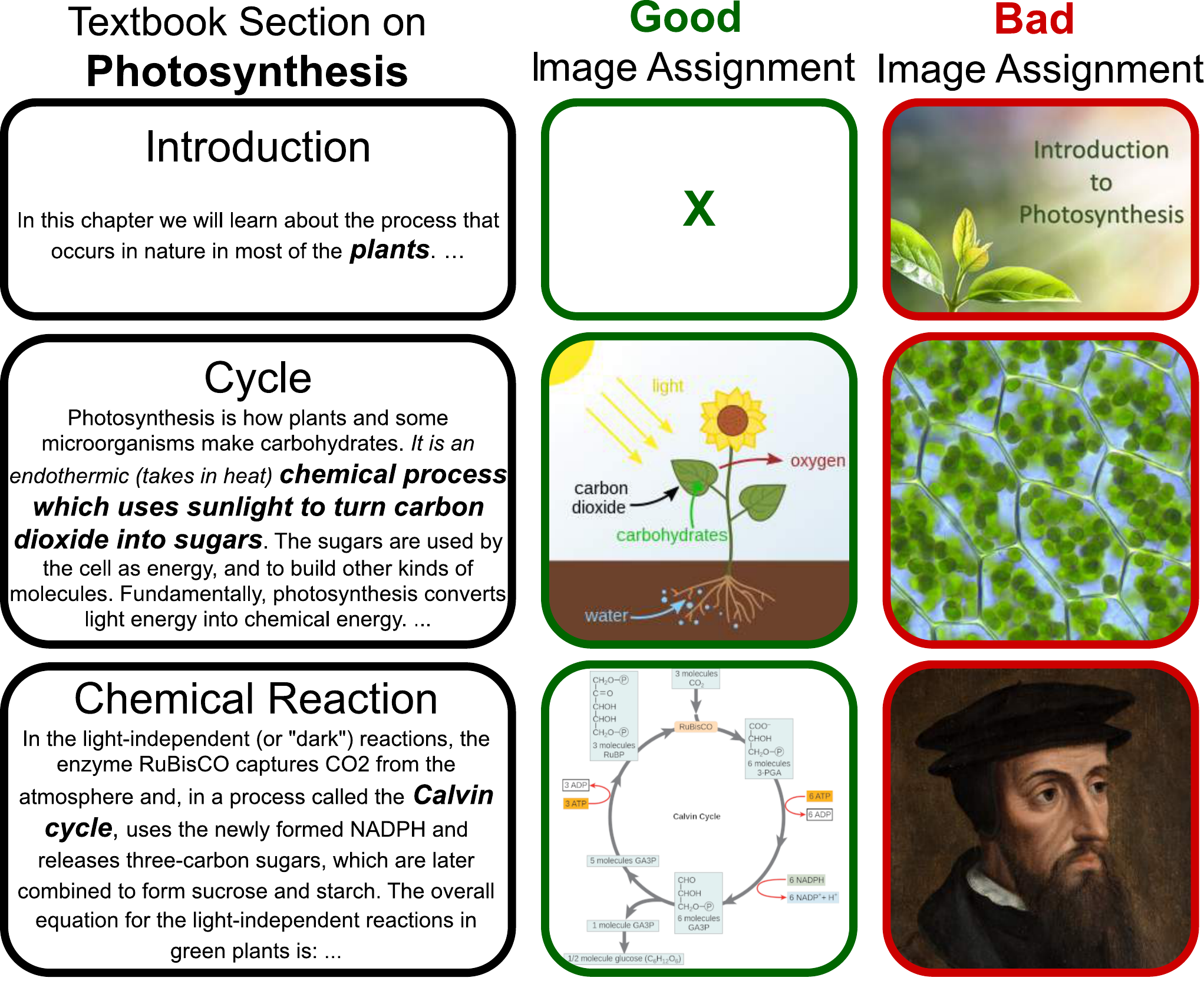}
\vspace{-7mm}
\caption{ Illustration of \textit{good and bad image assignments}. 
The first subsection does not require an image. 
The second subsection in the bad assignment has a related image but without a strong connection. 
Likewise, the last subsection has a picture of Calvin, which is related but does not have a high pedagogical value.
}
\vspace{-4mm}
\label{fig:textbook_assignment_illustration}
\end{figure}

\section{Introduction}
Students use textbooks as one of the primary mediums of learning.
It is thus imperative that textbooks are designed to provide a rich learning and engaging environment.
Visuals enhance learning through a number of means, including the ability to retain information, as well as its ability to promote comprehension and knowledge transfer \citep{Carney2002PictorialIS,Dimopoulos2003TowardsAA,Katsioloudis2007IdentificationOQ,Hibbing2008API,Panjwani2009EffectsOI,Mayer2019MultimediaL}.
Automatic approaches for retrieving and assigning images from the web to textbook chapters can therefore assist textbook designers in the creation of better textbooks.
However, this is a very challenging task as an ideal illustrative visual should not only be related to the textbook material, but also have pedagogical value (see \Cref{fig:textbook_assignment_illustration}).

\citet{10.1145/1926180.1926204, 10.1145/2063576.2063843} already enhance textbooks with images from the Internet. 
They use similarity scores of textual captions from images and the text present in the textbooks for image assignment.
This approach requires the image captions to be available.
Beyond the problem of availability, the captions are also highly context dependent which reduces their utility in our setting.
In this work, we aim to reinvigorate this area, and present and analyse a real dataset of textbooks. %
We do so by setting up the problem of image assignment using textbook and Wikipedia images.

In contrast to previous work, we rely on the recent advances in vision-language models, such as CLIP \cite{Radford2021LearningTV} and DALL-E \cite{pmlr-v139-ramesh21a}.
We analyze our dataset to gain insights into the organization of concepts and illustrative images within the textbooks.
This analysis inspires the formulation of a new optimization problem focused on modeling of illustrations in textbooks.
The solution to this problem maximizes the coverage of the illustrations while minimizing redundancy during image-to-paragraph assigments.
Our approach uses CLIP to retrieve and appropriately assign images to long-form text, particularly a textbook section.
Then, the overall assignment is obtained in %
following stages:
\begin{enumerate}[left=0mm,topsep=0mm,label=\arabic*)]
\item Given a piece of text, produce a set of concepts that should be addressed by a visualization,
\item Given an image and a concept, determine their mutual relevance, and
\item Given a piece of text (e.g., a textbook section), produce an adequate assignment of images.
\end{enumerate}
\vspace{2mm}

Each of the three sub-problems listed above can be solved with a variety of approaches --- indeed we explore several variants, which we describe in \Cref{sec:assignment}.
Because of the modularity of this approach, each of the sub-problems can be improved independently and adapted to the  dataset in future work.
Overall, we contribute the following:
\begin{itemize}[left=0mm,topsep=0mm]
\item A dataset that contains text and images drawn from 35 textbooks covering math, business, social sciences, and science in addition to a secondary image bank of $\sim$312K images taken from Wikipedia (\Cref{sec:dataset}).
\item Formalization of multiple textbook enrichment optimization goals (\Cref{sec:assignment}).
\item Human evaluation and an in-depth examination of the possible failure modes and challenges of the proposed methods (\Cref{sec:human_eval}).
\end{itemize}

\section{Related Works}

\paragraph{Vision Language Models:}
Alignment between texts and images has seen rapid progress recently with models such as CLIP \cite{Radford2021LearningTV} and DALL-E \cite{pmlr-v139-ramesh21a}.
However, their usages are limited to a short and specific text prompt to which the performance is usually quite sensitive.
We focus on the problem of retrieving images for very long textual inputs, specifically a textbook section, where it is unclear which part of the text specifically describes the relevant image.

\paragraph{Image Text Matching for Long Texts:}
Recently, \citet{https://doi.org/10.48550/arxiv.2209.07526,https://doi.org/10.48550/arxiv.2208.10442,zeng2022multigrainedvl} trained better language-vision representations with more nuanced associations, such as multiple vision tasks or finer image-text alignments. 
However, this progress is mainly confined to datasets, like MS-COCO \cite{Lin2014MicrosoftCC} and Flickr-30K \cite{10.1162/tacl_a_00166}, that contain natural images and their captions, which are rather short.
Additionally, \citet{schneider-etal-2021-towards} show that current multimodal models perform poorly at retrieving relevant images for longer and more complex textual inputs. 
The reason for this poor performance is the pre-training on shorter and very specific image captions.
This is a strong requirement to our work, which is focused on even longer text inputs.
We explore the problem of assigning images to lengthy text, which highlights issues such as ensuring comprehensive coverage of concepts and avoiding redundant image illustrations.
To move even closer to a real-world setting, we perform this task with actual textbooks and a human study.

\paragraph{Enriching Text with Images:}
The task of textbook enrichment was first explored by \citet{10.1145/1926180.1926204, 10.1145/2063576.2063843}, who assume that web images have associated relevant captions.
We note that the caption of images are largely dependent on the context where the image was originally assigned.\footnote{For example, the same \href{https://en.wikipedia.org/wiki/File:We_Can_Do_It!_NARA_535413_-_Restoration_2.jpg}{image of a poster} from Wikipedia has different captions on different pages: \textit{``The munitions industry heavily recruited women workers, as represented by the U.S. government's Rosie the Riveter propaganda campaign''} and \textit{``J. Howard Miller's We Can Do It! poster from 1943s''.}}
The language of the caption may not even match the textbook's language.
To alleviate all this, we do not assume that the images have associated relevant captions. %
\citet{seo-etal-2015-solving,Kembhavi2017AreYS,Lee2022MultimodalLP} also studied associating images with textual information. 
However, their primary goal has largely been to comprehend the image content, and thus differs from our objective.
Finally, there has been more past work on NLP applied to textbooks \citep{sachan-xing-2017-learning,sachan-etal-2017-textbooks,sachancoli}. However, the goal of these works also differ significantly from ours.

\section{Dataset}
\label{sec:dataset}

We now present the process of curation and structure of our dataset with an analysis.

\subsection{Data Collection}

\paragraph{\href{https://openstax.org/subjects}{\textcolor{black}{OpenStax}} Books.}
We source both the text and assigned images from 35 textbooks from an online textbook publisher \href{https://openstax.org/}{openstax.org}, covering four subjects: business, social sciences, sciences, and maths.
Each textbook is organized into \textit{chapters}, \textit{sections}, \textit{subsections}, and \textit{paragraphs}.
See their distribution in our dataset in \Cref{tab:dataset_overview_count}.
For each subsection of the textbook,\footnote{Example of a subsection: \href{https://openstax.org/books/concepts-biology/pages/5-1-overview-of-photosynthesis}{openstax.org/books/concepts-biology/pages/5-1-overview-of-photosynthesis}.} we identify the following key elements:
\begin{itemize}[noitemsep,topsep=1mm]
    \item \textbf{Text}: Raw text from the subsection. 
    \item \textbf{Phrases}: Raw text from the subsection decomposed with overlapping sliding windows.\footnote{A window-size of 75 tokens and an overlap-ratio of $\sfrac{1}{3}$.}
    \item \textbf{Concepts}: Key concepts taught in the subsection are the \textit{bolded} words or phrases, \textit{headings}, \textit{index} terms, and \textit{key} terms (both marked explicitly in the book as such).
    \item \textbf{Images}: Image(s) assigned within subsection.
\end{itemize}

\begin{table}[htbp]
\centering
\resizebox{\linewidth}{!}{
\begin{tabular}{lccc}
\toprule
\textbf{Subject} & \textbf{Sections} & \textbf{Subsections} & \textbf{Images} \\
\midrule
Science & 1431 & 6775 & 5973 \\
Math & 461 & 4314 & 1177 \\
Social Science\hspace{-5mm} & 517 & 2463 & 1514 \\
Business & 553 & 2669 & 962 \\
\cmidrule{1-1}
All & 2962 & 16221 & 9626 \\
\bottomrule
\end{tabular}
}
\caption{Summary of the number of sections, subsections, and assigned images in our dataset.
We partition the chapters ($\sim$9.6K images) into \textit{train} ($\sim$7.2K images), \textit{dev} ($\sim$1.2K images), and \textit{test} ($\sim$1.2K images) \textbf{splits}.
While partitioning, we assign each book to a single split we ensure that all subjects are well represented in different splits and that there is minimal concept overlap.
}
\label{tab:dataset_overview_count}
\end{table}

\paragraph{Wikipedia images.}
To mimic the task of textbook enrichment, we use a dataset of images from Wikipedia that are relevant to the \textit{concepts} in the OpenStax Books dataset.
This dataset serves as a proxy for images from the web. 
We search for relevant Wikipedia articles for each concept, with a maximum of 20 articles retrieved per concept. 
From these articles, we extract images and their captions by searching the article in the WIT dataset \cite{Srinivasan2021WITWI} or directly from the article. 
The final dataset has approximately $\sim$312K unique images included in the relevant articles.

\paragraph{Image Bank.}
We combine OpenStax Books images and the Wikipedia images to form the Image Bank.
Our objective is to retrieve and assign relevant images from the Image Bank to each \textit{section} that is present in the OpenStax Books dataset.

\subsection{Dataset Analysis}
\label{sec:dataset-analysis}

We profile the dataset according to a series of questions, which will inform the problem formulation.

\paragraph{Q1. How are \textit{concepts} distributed?}

The patterns in concept mentions are similar across subjects (\Cref{fig:subfig2}).
The distribution of concepts within subsections (\Cref{fig:subfig1}) reveals an average of 5.6 concepts per subsection. 
An average concept is mentioned 2.7 times within a subsection (\Cref{fig:subfig2}) --- that is, concepts are \textit{infrequently} mentioned in the subsection.
Notably, each section concept is mentioned in only 1.7 subsections on average (\Cref{fig:subfig3}), emphasizing the high localization of concepts to specific subsections.

\begin{figure*}[htbp]
  \centering
  
  \begin{subfigure}[t]{\textwidth}
    \centering
\includegraphics[width=0.7\linewidth]{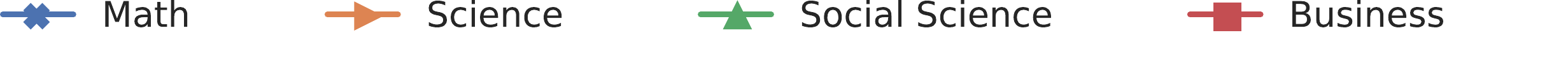}
  \end{subfigure}

  \begin{subfigure}[b]{0.32\textwidth}
    \centering
    \includegraphics[width=\textwidth]{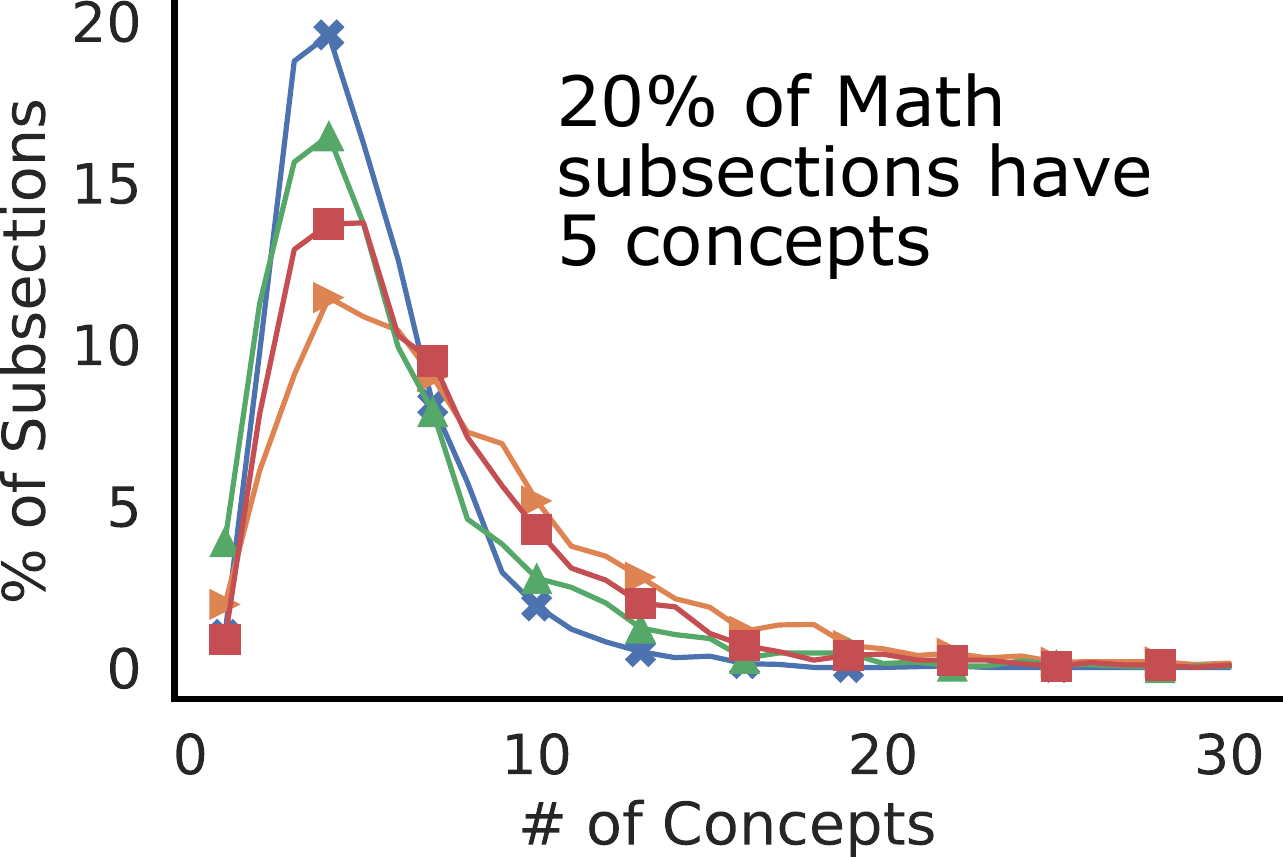}
    \caption{
    \# concepts across subsections. 
    }
    \label{fig:subfig1}
  \end{subfigure}
  \begin{subfigure}[b]{0.32\textwidth}
    \centering
    \includegraphics[width=\textwidth]{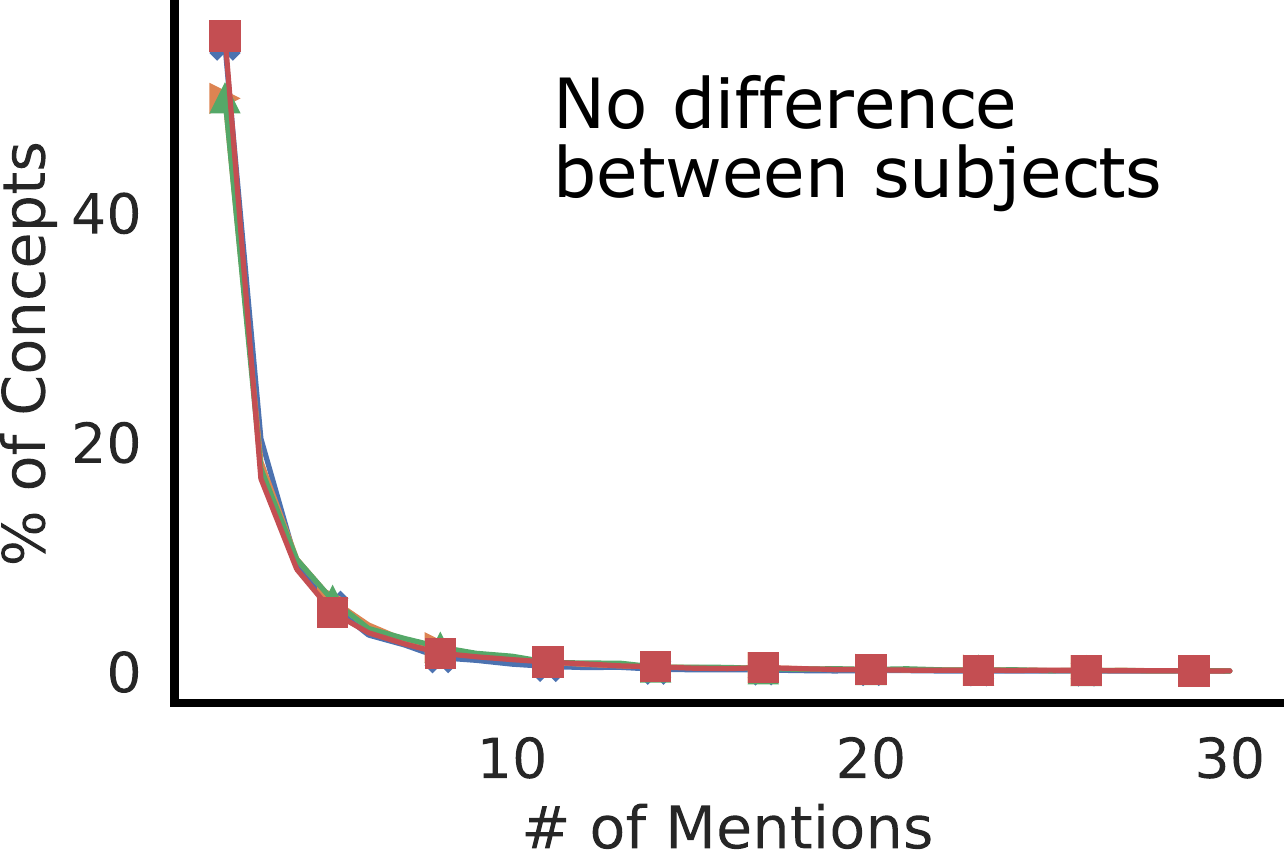}
    \caption{
    \# mentions of across concepts.
    }
    \label{fig:subfig2}
  \end{subfigure}
  \begin{subfigure}[b]{0.32\textwidth}
    \centering
    \includegraphics[width=\textwidth]{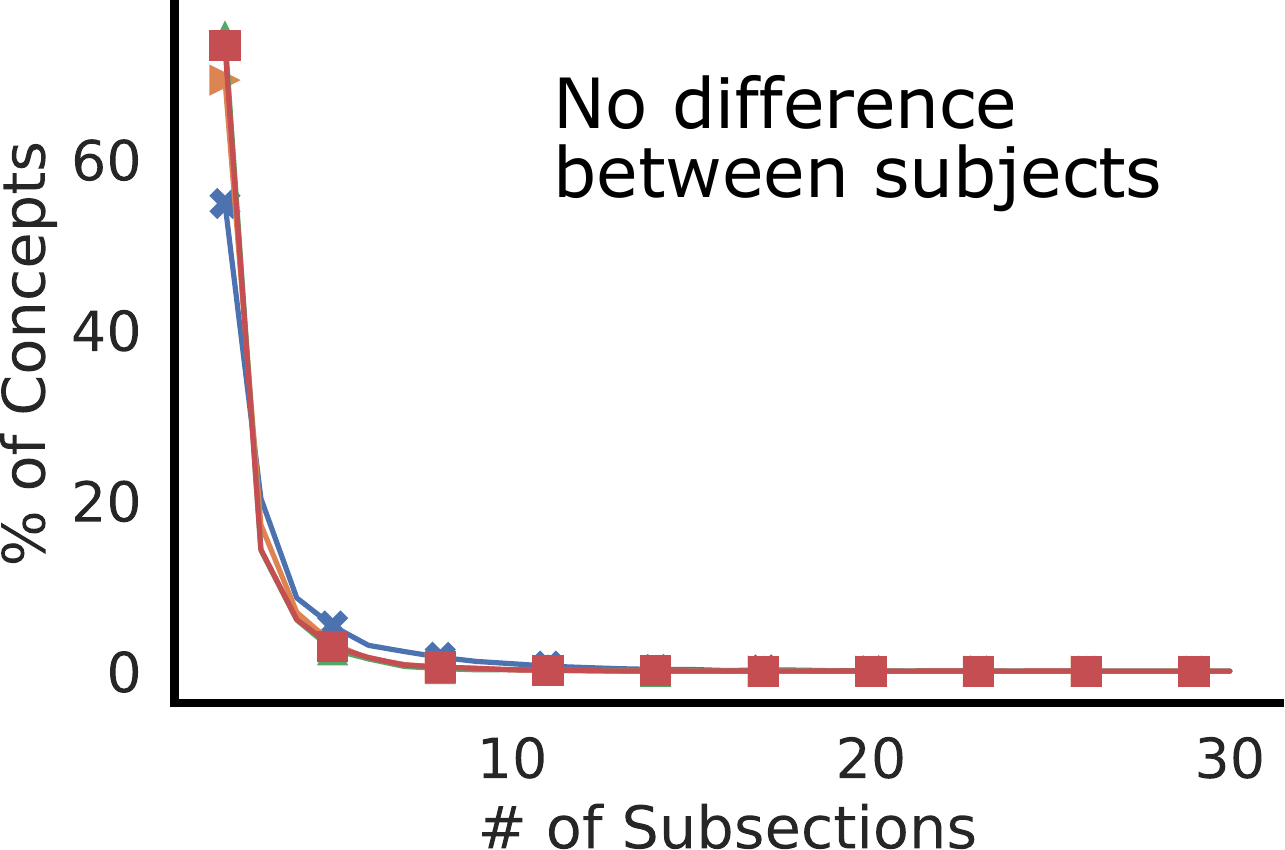}
    \caption{
    \# subsections with a concept mention.
    }
    \label{fig:subfig3}
  \end{subfigure}
  \vspace{-2mm}
  \caption{Distribution of concept-mentions. On average: (a) a subsection has 5.6 distinct concepts; (b) a concept is mentioned 2.7 times within a subsection; and (c) a concept from a section is mentioned in 1.7 of its subsections.}
  \vspace{-2mm}
  \label{fig:concept-pattern}
\end{figure*}

\paragraph{Q2. What influences the number of assigned images in a subsection?}

On average each section consists of 5.5 subsections and 3.3 assigned images (\Cref{tab:dataset_overview_count}). 
To answer the question at hand, we conduct a regression analysis with the number of images in a subsection as the predicted variable, and the following as features:
\vspace{-1mm}
\begin{itemize}
   \item \texttt{concepts/words/paragraphs}: their total \# in the subsection.
   \item \texttt{concepts\_uniq}: \# of unique concepts mentioned in the subsection.
   \item \texttt{\%sec\_concepts}: \% of unique concepts from the section in the subsection.
   \item \texttt{\%sec\_concept}: \% of total concept mentions from the section in the subsection.
   \item \texttt{\%sec\_words}: \% of total words in the section which are in the subsection.
   \item \texttt{\%sec\_paragraphs}: \% of paragraphs in the section in the subsection.
   \item \texttt{position} of the subsection in the section from 0 (beginning) to 1 (end).
   \item \texttt{subject} of the book.
 \end{itemize}

Based on the results in \Cref{fig:regression_coeff}, the number of assigned images to a subsection can be best predicted from total number of concepts, words, and paragraphs of the subsection.
Unexpectedly, this is not true for the number of unique concepts.
Furthermore, the position of the subsection within the section is negatively correlated to the image count --- that is, the subsections located later in the section have fewer images.
The subject of the book also impacts the subsection's image count, with differing coefficients for each subject.
Overall, the regression model yields Pearson correlation of 0.59 with ${p{<}10^{-4}}$ --- a high degree of predictability.

\begin{figure}[htbp]
\centering
\includegraphics[width=0.85\linewidth]{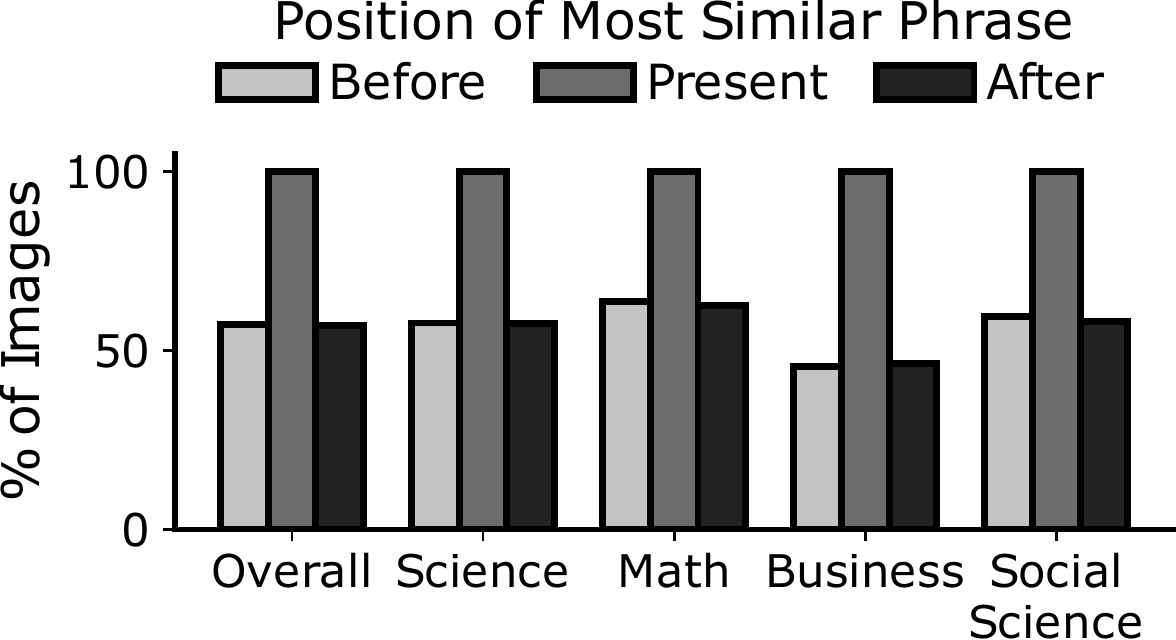}
\caption{Proportion of images with the highest similarity scores to phrases from the subsection before, present, and after the gold subsection. Across all subjects, nearly 57\% of the gold images have the highest CLIP similarity with at least one phrase the subsection before or after.}
\vspace{-3mm}
\label{fig:position-subsection-similarity}
\end{figure}

\paragraph{Q3. Are images exclusive to the assigned subsection?}

We use CLIP's similarity scores for image-phrase relevance (detailed in \Cref{sec:model_description}).
For each image in the textbook we distinguish between the present subsection and the one before and after.
Then, we assign images to the subsection with the highest-matching phrase.
The percentages of most-similar images in the before and after subsections is nearly equal (\Cref{fig:position-subsection-similarity}), which is contrary to the intuition that the subsections after the current one would refer to the concepts in the image. 
The difference between the before/after and present subsections is the greatest in the business category, indicating uniqueness of images assigned to a particular subsection compared.
Such uniqueness, as determined by the CLIP scores, is most absent in mathematics books.
Overall, the images do not exclusively best-match the phrases from the gold-assigned subsection.

\begin{figure}[htbp]
\centering
\includegraphics[width=0.8\linewidth]{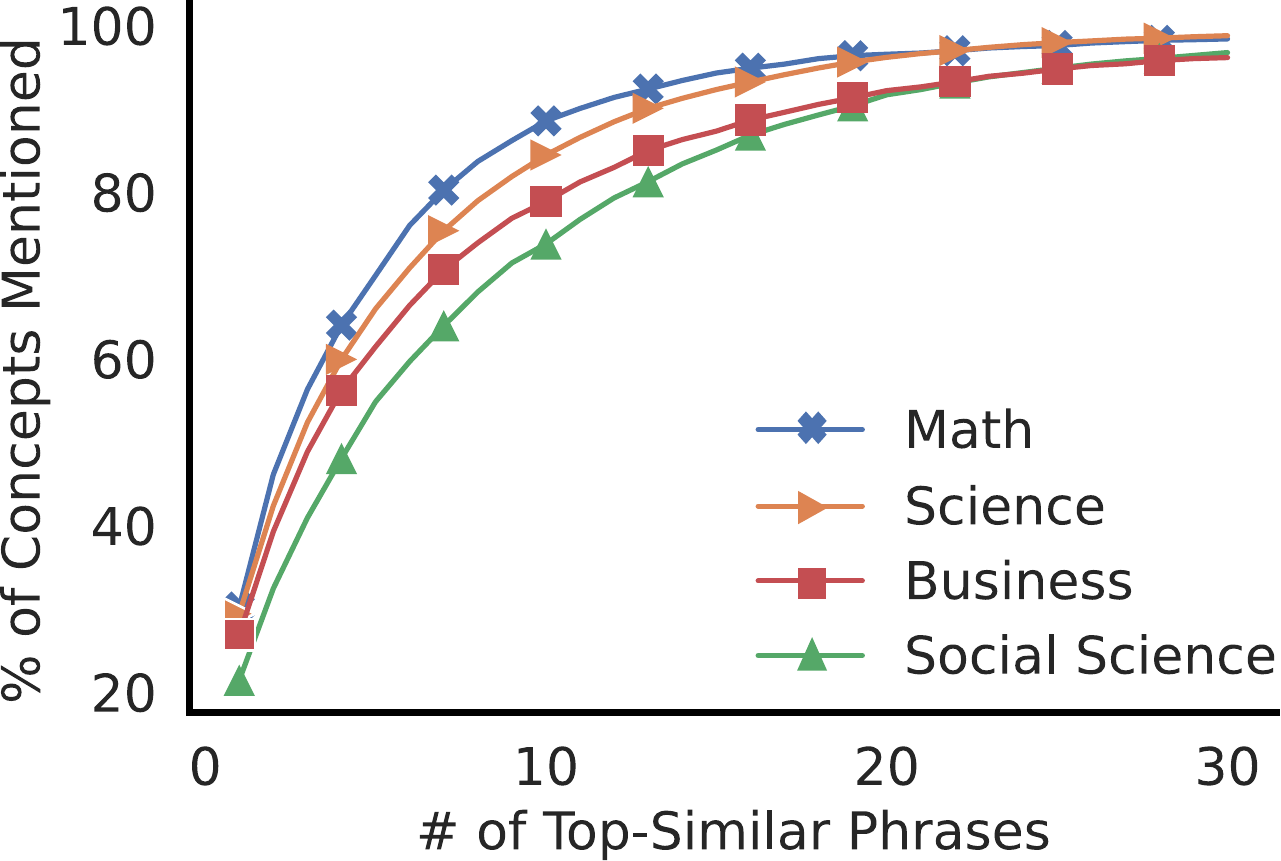}
\caption{Proportion of concepts mentioned in top-$k$ most similar text-phrases to the gold-set of images in a subsection. On average for each subsection, 50\% concepts are mentioned in the top-3 most similar text-phrases with the gold images.}
\label{fig:cocepts-covered}
\end{figure}

\paragraph{Q4. Are concept mentions associated with assigned images?}
We rank subsection phrases using CLIP similarity scores with subsection images.
We use this ranking to calculate the percentage of concepts that were mentioned in the top-similar phrases associated with the gold images in the subsection. 
This way, we evaluate whether text-phrases with higher association to the gold-image also had more concept mentions.
Indeed, there is a correspondence between the gold images and phrase with concept mentions (\Cref{fig:cocepts-covered}).
This warrants further usage of CLIP scores as a measure for matching concepts to images.

\section{Image Retrieval and Assignment}
\label{sec:assignment}
\label{sec:model_description}

We first describe an image retrieval model and then formalize our task and the optimization approach.
CLIP \cite{Radford2021LearningTV} is a state-of-the-art vision-language model trained on many image-caption pairs from the web by maximizing the dot-product similarity the image and caption encodings.
We further fine-tune CLIP on image-text pairs from the OpenStax Books dataset.

\subsection{Image Assignment Formulation}

Our formulation focuses on the assignment of images to subsections.\footnote{We can assign images to the entire book by concatenating all chapters into one long ``section''. One can also assign images to paragraphs by treating them as a subsection each with a single paragraph.} We begin with notation: %
\begin{align}
\text{subsections {\symbsubsec} in a section \symbsec} = \langle \symbsubsec_1, \ldots, \symbsubsec_{|\symbsec|} \rangle \\
\text{concepts {\symbconcept} in a subsection \symbsubsec} = \{ \symbconcept_1, \ldots, \symbconcept_{K} \}  
\end{align}
We decompose the text of a subsection into \textit{phrases} using a sliding window approach. 
These phrases may mention a particular \textit{concept}:
\begin{align}
&\text{phrases {\symbphrase} in a subsection \symbsubsec} = \langle \symbphrase_1, \ldots, \symbphrase_{L} \rangle \\
&\text{Let $m(\symbphrase_l, \symbconcept_k)$ denote mention of concept {\symbconcept} in {\symbphrase}} & \nonumber \\
&\qquad m(\symbphrase_l, \symbconcept_k) = 
\begin{cases}
    1 \quad \textrm{if } \symbconcept_k \textrm{ mentioned in } \symbphrase_l \\
    0 \quad \textrm{otherwise}
\end{cases} \nonumber \\[-7mm]
\text{}
\end{align}

For a fair comparison, we assign the same number of images to each subsection as in the gold assignment.
This can also be automated with the image count prediction (\Cref{sec:dataset-analysis}/Q2).

\subsection{Local Assignment}
\label{subsec:local_assignment}

The most straightforward solution is to select an image for each subsection independently by maximizing the subsection text-image similarity.
Specifically, we assign each subsection $\symbsubsec$, with an image $\symbimage \in \symbimages$, which maximises the following function:
\begin{align}
\similarity(\{\symbimage\}, \symbsubsec) = \sum_{\symbphrase \in \symbsubsec} \simi(\symbimage, \symbphrase)
\end{align}
Here, $\symbimages$ denotes set of all the images in the image bank.
Moreover, $\simi(\symbimage, \symbphrase)$ denotes probability (normalised dot-product similarity across images) of any image $i$ and certain phrase $\symbphrase$ as given by the fine-tuned CLIP model. 

While local assignment is fast and simple, our qualitative analysis reveals that it \textit{lacks global coherence} and may assign images depicting overlapping concepts to the same section.
For example, if every subsection mentions the concept ``molecule'', then all subsections can be assigned the same image of a molecule. 
This finding aligns with our previous results (\Cref{sec:dataset-analysis}/Q3) and is supported by the \textit{redundancy} metrics in \Cref{sec:human_eval}.

\subsection{Global Assignment}

The analysis in \Cref{sec:dataset-analysis} revealed that the most-relevant phrases for gold images are not restricted to the assigned subsection and that the concepts are localized within their respective sections. 
Therefore, for better global coherence in our assignments, we assign images based on concepts rather than phrases.
Specifically, we select a subset of images that covers most of the concepts (\textit{coverage}) while avoiding overlaps (\textit{redundancy}).
To define \textit{coverage} and \textit{redundancy} functions for concepts in a section, we first define a boolean function for image, $\symbimage$, covering a concept, $c$, as follows:
$\cov(c, i) = \mathbb{I}_{\simi(\symbimage, \symbphrase) \geq \tau} \cdot m(\symbphrase, \symbconcept)$.
Informally, $\cov(\symbconcept, \symbimage)$ is 1 iff concept $\symbconcept$ is covered by image $\symbimage$, otherwise 0.
Next, we formalize the coverage and redundancy.

\paragraph{Coverage.}
Coverage for a section $\symbsec$ and a subset of images $\symbsubimages$ is the number of unique concepts in section $\symbsec$ which are covered by images in $\symbsubimages$:
\begin{align}
\coverage(\symbsubimages, s) {=} \left|\{\symbconcept \in \symbsec \mid \exists \symbimage \in \symbsubimages  : \cov(\symbconcept, \symbimage)  = 1 \} \right| %
\end{align}

\paragraph{Redundancy.} Redundancy of a section $\symbsec$ and a a set of images $\symbsubimages$ is the total number of times concepts in $\symbsec$ are multiply covered:
\begin{align}
\redundancy(\symbsubimages, s) = \sum_{\symbconcept \in \symbsec}  \sum_{\symbimage \in \symbsubimages}  \cov(c, i) - \coverage(\symbsubimages, s)
\end{align}

We now introduce the concept of set submodularity which will be necessary for proving approximation bounds of the optimization.

\begin{definition}
A function $f$ is said to be \textbf{set submodular} if and only if $\forall A \subseteq B \; \forall x: f(A\cup \{x\}) - f(A) \geq f(B \cup \{x\}) - f(B)$.
Informally, the function yields diminishing returns for item $x$.
\end{definition}

\begin{theorem}
\label{thm:cov_submodular}
The coverage function $\coverage$ is set submodular.
That is for $\symbsubsubimages$ such that $\symbsubsubimages \subseteq \symbsubimages \subseteq \symbimages$ and $\symbimage \in \symbimages$ it holds that $\coverage(\symbsubsubimages \cup \{ \symbimage \}) - \coverage(\symbsubsubimages) \geq \coverage(\symbsubimages \cup \{ \symbimage \}) - \coverage(\symbsubimages)$.
\end{theorem}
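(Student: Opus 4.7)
The plan is to recognize that $\coverage$ is a classical set-cover–style function, and to reduce the claim to the well-known fact that the cardinality of a union of sets is submodular.

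First, for each image $\symbimage \in \symbimages$ I would define $N(\symbimage) = \{ \symbconcept \in \symbsec \mid \cov(\symbconcept, \symbimage) = 1\}$, i.e., the set of concepts in section $\symbsec$ that image $\symbimage$ covers (this is well-defined from the $\cov$ function given before the theorem). The coverage function then rewrites cleanly as
\begin{equation*}
\coverage(\symbsubimages, \symbsec) \;=\; \Bigl| \bigcup_{\symbimage \in \symbsubimages} N(\symbimage) \Bigr|,
\end{equation*}
since a concept is counted in $\coverage$ iff it lies in $N(\symbimage)$ for at least one $\symbimage \in \symbsubimages$.

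Next, I would compute the marginal gain of adding a new image $\symbimage$ to an existing set $\symbsubimages$. From the union formulation,
\begin{equation*}
\coverage(\symbsubimages \cup \{\symbimage\}, \symbsec) - \coverage(\symbsubimages, \symbsec) \;=\; \Bigl| N(\symbimage) \setminus \bigcup_{\symbimage' \in \symbsubimages} N(\symbimage') \Bigr|,
\end{equation*}
because the only additional concepts covered are those in $N(\symbimage)$ that were not already covered by some image in $\symbsubimages$.

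Finally, I would invoke monotonicity of unions: since $\symbsubsubimages \subseteq \symbsubimages$, we have $\bigcup_{\symbimage' \in \symbsubsubimages} N(\symbimage') \subseteq \bigcup_{\symbimage' \in \symbsubimages} N(\symbimage')$, and removing a larger set from $N(\symbimage)$ can only leave a smaller (or equal) residual, so
\begin{equation*}
\Bigl| N(\symbimage) \setminus \bigcup_{\symbimage' \in \symbsubsubimages} N(\symbimage') \Bigr| \;\geq\; \Bigl| N(\symbimage) \setminus \bigcup_{\symbimage' \in \symbsubimages} N(\symbimage') \Bigr|,
\end{equation*}
which is exactly the submodularity inequality required. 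There is no real obstacle here — the only subtle point to flag is that the $\cov$ function as written in the paper actually takes a phrase $\symbphrase$ inside its indicator, so I would briefly note the convention that $\cov(\symbconcept, \symbimage) = 1$ means $\exists \symbphrase$ in the section with $m(\symbphrase,\symbconcept) = 1$ and $\simi(\symbimage,\symbphrase) \geq \tau$, so that $N(\symbimage)$ is a well-defined fixed subset of concepts independent of the surrounding image set, which is what makes the union argument go through.
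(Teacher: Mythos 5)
Your proposal is correct and follows essentially the same argument as the paper: the paper likewise introduces the set of concepts covered by each image set (its $\symbsubconcepts$, $\symbsubsubconcepts$, and $\mathcal{C}_q$ are your $\bigcup_{\symbimage'} N(\symbimage')$ and $N(\symbimage)$) and reduces the claim to the inequality $|\symbsubconcepts \cup \mathcal{C}_q| - |\symbsubconcepts| \leq |\symbsubsubconcepts \cup \mathcal{C}_q| - |\symbsubsubconcepts|$, which is exactly your set-difference monotonicity step written as a difference of union cardinalities. Your explicit remark that $N(\symbimage)$ is a fixed set independent of the surrounding image set is a useful clarification of a point the paper leaves implicit, but it does not change the route of the proof.
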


\begin{proof}
Let $\symbsubconcepts$ and $\symbsubsubconcepts$ be sets of concepts from a subsection covered by images in $\symbsubimages$ and $\symbsubsubimages$ respectively.
As per the definition of $\coverage$, we have $\symbsubsubconcepts \subseteq \symbsubconcepts$.
Let $\mathcal{C}_i$ be the concepts covered by image $\symbimage$.
\begin{align}
& \coverage(\symbsubimages \cup \{\symbimage\}, s) - \coverage(\symbsubimages, s) = \hspace{-1cm}\\
&= \left| \symbsubconcepts \cup \mathcal{C}_q \right| - \left| \symbsubconcepts \right|
& \prooftext{from def.} \label{eq:1} \\
&\leq\left| \symbsubsubconcepts \cup \mathcal{C}_q \right| - \left| \symbsubsubconcepts \right|
& \prooftext{from $\symbsubsubconcepts \subseteq \symbsubconcepts$} \label{eq:2} \\
&= \coverage(\symbsubsubimages \cup \{\symbimage\}, s) - \coverage(\symbsubsubimages, s)\hspace{-7mm}
& \prooftext{from def.} \label{eq:3}
\end{align}
\end{proof}

\begin{theorem}
The negative redundancy function $-\redundancy$ is set submodular.
That is for $\symbsubsubimages$ such that $\symbsubsubimages \subseteq \symbsubimages \subseteq \symbimages$ and $\symbimage \in \symbimages$ it holds that $-\redundancy(\symbsubsubimages \cup \{ \symbimage \}) + \redundancy(\symbsubsubimages) \geq -\redundancy(\symbsubimages \cup \{ \symbimage \}) + \redundancy(\symbsubimages)$.
\end{theorem}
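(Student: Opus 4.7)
The plan is to reduce submodularity of $-\redundancy$ to the already-established submodularity of $\coverage$ from \Cref{thm:cov_submodular}. The key observation is that the double sum $\sum_{\symbconcept \in \symbsec} \sum_{\symbimage \in \symbsubimages} \cov(\symbconcept, \symbimage)$ in the definition of $\redundancy$ is \emph{modular} (linear) in the image set, since each $\cov(\symbconcept, \symbimage)$ depends only on the individual image, not on the rest of the set. Therefore, up to a modular term, $\redundancy$ is just $-\coverage$, and submodularity should transfer directly.

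Concretely, first I would compute the marginal contribution of a new image $\symbimage$ to $\redundancy$ on an arbitrary set $X \subseteq \symbimages$:
\begin{align*}
\redundancy(X \cup \{\symbimage\}, \symbsec) - \redundancy(X, \symbsec)
&= \sum_{\symbconcept \in \symbsec} \cov(\symbconcept, \symbimage) - \bigl[\coverage(X \cup \{\symbimage\}, \symbsec) - \coverage(X, \symbsec)\bigr].
\end{align*}
The first term on the right-hand side depends only on $\symbimage$ and is independent of $X$. Hence the marginal gain of $\redundancy$ at $\symbimage$ equals a constant (in $X$) minus the marginal gain of $\coverage$ at $\symbimage$.

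Next, I would apply \Cref{thm:cov_submodular} to the pair $\symbsubsubimages \subseteq \symbsubimages$: the submodularity of $\coverage$ gives $\coverage(\symbsubsubimages \cup \{\symbimage\}, \symbsec) - \coverage(\symbsubsubimages, \symbsec) \geq \coverage(\symbsubimages \cup \{\symbimage\}, \symbsec) - \coverage(\symbsubimages, \symbsec)$. Substituting into the displayed identity above (once with $X = \symbsubsubimages$, once with $X = \symbsubimages$) and cancelling the constant $\sum_{\symbconcept} \cov(\symbconcept, \symbimage)$ yields
\[
\redundancy(\symbsubimages \cup \{\symbimage\}, \symbsec) - \redundancy(\symbsubimages, \symbsec) \geq \redundancy(\symbsubsubimages \cup \{\symbimage\}, \symbsec) - \redundancy(\symbsubsubimages, \symbsec),
\]
which is precisely the supermodularity of $\redundancy$, or equivalently the submodularity of $-\redundancy$ as stated.

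There is no real obstacle: the only mildly delicate step is to notice that additivity of the inner double sum makes its marginal equal a set-independent constant, which is what lets the inequality from $\coverage$ pass through to $\redundancy$ with the direction flipped. No further use of the structure of $\cov$, $\simi$, or $m$ is needed beyond the fact that $\cov(\symbconcept, \symbimage) \in \{0,1\}$ is determined by the single image $\symbimage$.
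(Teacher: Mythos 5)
Your proposal is correct, but it is organized differently from the paper's own proof. The paper argues directly: writing $\symbsubconcepts$, $\symbsubsubconcepts$ for the concepts covered by $\symbsubimages$, $\symbsubsubimages$ and $\mathcal{C}_q$ for those covered by the new image, it computes the marginal of $\redundancy$ in closed form, $\redundancy(\symbsubimages \cup \{\symbimage\}, \symbsec) - \redundancy(\symbsubimages, \symbsec) = \left|\symbsubconcepts \cap \mathcal{C}_q\right|$, and then concludes from $\symbsubsubconcepts \subseteq \symbsubconcepts$ that this marginal can only shrink on the smaller set, which is exactly supermodularity of $\redundancy$. You instead decompose $\redundancy$ as a modular term $\sum_{\symbconcept \in \symbsec}\sum_{\symbimage' \in \symbsubimages} \cov(\symbconcept, \symbimage')$ minus $\coverage$, observe that the modular term contributes a set-independent constant to every marginal, and let \Cref{thm:cov_submodular} do the rest after cancelling that constant. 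The two computations are of course equivalent (the paper's $\left|\symbsubconcepts \cap \mathcal{C}_q\right|$ is precisely your constant $\sum_{\symbconcept} \cov(\symbconcept, \symbimage)$ minus the coverage marginal), but your route has the advantage of reusing the earlier theorem and making explicit the general principle that adding a modular function preserves submodularity, so nothing beyond \Cref{thm:cov_submodular} needs to be re-derived; the paper's route has the advantage of producing an explicit and interpretable formula for the redundancy marginal (the number of concepts of the new image already covered), which is also useful when reasoning about the greedy optimization of $\globalf$. One shared caveat, present in the paper's proof as well: both arguments implicitly treat $\symbimage$ as not already belonging to $\symbsubimages$ when computing marginals, which is the standard convention for the submodularity definition.
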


\begin{proof}
Similarly to \Cref{thm:cov_submodular}, for redundancy function $\redundancy$ we observe that:
\begin{align}
& \redundancy(\symbsubimages \cup \{i_q\}, s) - \redundancy(\symbsubimages, s)  =\hspace{-1cm}\\
& = \left| \symbsubconcepts \cap \mathcal{C}_q \right|
& \prooftext{from def.}\label{eq:4}  \\
& \geq  \left| \symbsubsubconcepts \cap \mathcal{C}_q \right|
& \prooftext{from $\symbsubsubconcepts \subseteq \symbsubconcepts$} \label{eq:5}  \\
& = \redundancy(\symbsubsubimages \cup \{i_q\}, s) - \redundancy(\symbsubsubimages, s) \hspace{-8mm}
& \prooftext{from def.} \label{eq:6}
\end{align}
\vspace{-3mm}
\end{proof}

\begin{observation}
Both $\coverage$ and $\redundancy$ are monotone.
\end{observation}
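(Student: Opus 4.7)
The plan is to establish monotonicity directly from the definitions, in the form that for every $\symbsubsubimages \subseteq \symbsubimages \subseteq \symbimages$ and every section $\symbsec$ we have both $\coverage(\symbsubsubimages, \symbsec) \leq \coverage(\symbsubimages, \symbsec)$ and $\redundancy(\symbsubsubimages, \symbsec) \leq \redundancy(\symbsubimages, \symbsec)$. The natural strategy is to reduce each inequality to the claim that the marginal contribution of any single image is non-negative, and then chain these single-image additions along any ordering of the elements of $\symbsubimages \setminus \symbsubsubimages$.

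For $\coverage$ I would appeal straight to the set-theoretic reading of its definition: the set of concepts covered by an image set is the union $\bigcup_{\symbimage \in \symbsubsubimages} \{ \symbconcept \in \symbsec \mid \cov(\symbconcept, \symbimage) = 1 \}$, which can only grow (or stay the same) when images are added. Taking cardinalities finishes this case in one line with no case analysis.

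For $\redundancy$ I would reuse the marginal-gain computation that already appears in the submodularity proof, specifically equations~\eqref{eq:4}--\eqref{eq:6}, where it is shown that adding a single image $\symbimage$ to $\symbsubsubimages$ changes $\redundancy$ by $|\symbsubsubconcepts \cap \mathcal{C}_\symbimage|$ with $\symbsubsubconcepts$ the set of concepts already covered by $\symbsubsubimages$ and $\mathcal{C}_\symbimage$ the concepts covered by $\symbimage$. Since the cardinality of an intersection is non-negative, adding any single image weakly increases $\redundancy$, and iterating over the images in $\symbsubimages \setminus \symbsubsubimages$ yields the desired inequality.

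Given these two steps, the result needs no new machinery. The only mildly subtle point — which I would call out explicitly so the reader does not expect a fresh calculation — is that the marginal expression derived for the submodularity theorem of $\redundancy$ is already manifestly non-negative, so monotonicity of $\redundancy$ drops out as a by-product rather than requiring an independent argument. There is no real obstacle here; I expect the full proof to occupy only a few lines, with the main care being consistent notation for the two covered-concept sets so that the reuse of the earlier computation is unambiguous.
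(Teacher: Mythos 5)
Your proof is correct, and it matches the reasoning the paper leaves implicit: the paper states this observation with no accompanying proof, and your argument supplies exactly the natural justification --- monotonicity of \coverage{} because the covered-concept set is a union over images (so it can only grow under set inclusion), and monotonicity of \redundancy{} because the single-image marginal $\redundancy(\symbsubimages \cup \{\symbimage\}, \symbsec) - \redundancy(\symbsubimages, \symbsec) = \left|\symbsubconcepts \cap \mathcal{C}_{\symbimage}\right| \geq 0$ already computed in the submodularity proof, chained over the images of $\symbsubimages \setminus \symbsubsubimages$. Nothing further is needed.
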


For the \textbf{global assignment}, we choose images $\symbsubimages \subseteq \symbimages$ such that the following is maximised:
\begin{align}
\globalf(\symbsubimages, s) = \coverage(\symbsubimages, s) - \redundancy(\symbsubimages, s) \label{eq:7}
\end{align}
$G$ (Equation \ref{eq:4}) is a submodular function because it is a sum of two submodular functions.
Finding the optimal solution to $G$ is NP-hard \cite{lovasz1983submodular}.
However, since $G$ is a submodular function, greedy algorithm can lead to fairly good $1-\sfrac{1}{e} \approx 63\%$-approximation of optimisation of $G$ under cardinality constraints $|\symbsubimages| \leq B$, where $B$ is the budget \citep{Nemhauser1978AnAO}.
Once images $\symbsubimages$ are greedily computed for a section, we assign an image $\symbimage \in \symbsubimages$ to the subsection, $\symbsubsec$, which maximises $\coverage(\{\symbimage\},\symbsubsec)$; to the subsection in which the image $\symbimage$ covers the most concepts.

\subsection{Joint Assignment}

The local assignment captures relevance while the global one also captures redundancy.
To optimize both of them, we formulate the following objective with a trade-off hyper-parameter $\beta \geq 0$:
\begin{align}
\jointf(\symbsubimages, \symbsec) & = \similarity(\symbsubimages, \symbsec) + \beta \cdot \globalf(\symbsubimages, \symbsec) \label{eq:8} \\
& = \sum_{\symbimage \in \symbsubimages} \sum_{\symbphrase \in \symbsec} \simi(\symbimage, \symbphrase) + \beta \cdot \globalf(\symbsubimages, \symbsec) \nonumber
\end{align}

Note that $\jointf$ is a submodular function, since it is the sum of two other submodular functions: $\beta \cdot \globalf$ and $\similarity$. The former is submodular due to the non-negativity of $\beta$ and previously proven submodularity of $\globalf$. The submodularity of $\similarity$ is proven below.

\begin{theorem}
The local assignment function $\similarity$ is set submodular.
That is, for a set of images $\symbsubimages$ and $\symbsubsubimages$ such that $\symbsubsubimages \subseteq \symbsubimages \subseteq \symbimages$ and any image $\symbimage \in \symbimages - \symbsubimages$ it holds that $\similarity(\symbsubsubimages\cup \{\symbimage\})-\similarity(\symbsubsubimages) \geq \similarity(\symbsubimages\cup \{\symbimage\})-\similarity(\symbsubimages)$.
\end{theorem}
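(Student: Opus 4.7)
The plan is to exploit the additive structure of $\similarity$ over the image set, which makes $\similarity$ not merely submodular but in fact modular (submodular with equality). Unpacking the definition from Equation~\ref{eq:8},
\begin{align*}
\similarity(\symbsubimages, \symbsec) = \sum_{\symbimage \in \symbsubimages} \sum_{\symbphrase \in \symbsec} \simi(\symbimage, \symbphrase),
\end{align*}
every term in the outer sum depends on a single image, and the inner sum $\sum_{\symbphrase \in \symbsec} \simi(\symbimage, \symbphrase)$ depends only on $\symbimage$ and the fixed section $\symbsec$, not on any other images already selected.

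First, I would compute the marginal gain of adding an image $\symbimage \notin \symbsubimages$ to any set $\symbsubimages$:
\begin{align*}
\similarity(\symbsubimages \cup \{\symbimage\}, \symbsec) - \similarity(\symbsubimages, \symbsec) = \sum_{\symbphrase \in \symbsec} \simi(\symbimage, \symbphrase).
\end{align*}
Crucially, the right-hand side does not reference $\symbsubimages$ at all. Applying this identity with $\symbsubsubimages$ in place of $\symbsubimages$ gives exactly the same quantity, so
\begin{align*}
\similarity(\symbsubsubimages \cup \{\symbimage\}, \symbsec) - \similarity(\symbsubsubimages, \symbsec) = \similarity(\symbsubimages \cup \{\symbimage\}, \symbsec) - \similarity(\symbsubimages, \symbsec),
\end{align*}
which satisfies the submodularity inequality with equality, regardless of the containment $\symbsubsubimages \subseteq \symbsubimages$.

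There is really no hard step here; the main thing to get right is the bookkeeping so that the argument clearly reduces to the observation that a sum of per-image contributions is modular. I would close by remarking that modularity is a special (degenerate) case of submodularity, which justifies combining $\similarity$ with $\beta \cdot \globalf$ in Equation~\ref{eq:8} and inheriting the $(1-\sfrac{1}{e})$ greedy approximation guarantee for $\jointf$ under a cardinality constraint.
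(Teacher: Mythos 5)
Your proof is correct and follows essentially the same route as the paper's: compute the marginal gain $\similarity(\symbsubimages \cup \{\symbimage\}, \symbsec) - \similarity(\symbsubimages, \symbsec) = \sum_{\symbphrase \in \symbsec} \simi(\symbimage, \symbphrase)$ and observe it is independent of the selected set, so the submodularity inequality holds (in fact with equality). Your explicit remark that $\similarity$ is modular is a slightly sharper statement of what the paper's one-line proof implicitly uses, but it is the same argument.
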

\begin{proof}
\vspace{-2mm}
\begin{align}
& \similarity(\symbsubimages \cup \{\symbimage\}, s) - \similarity(\symbsubimages, s) = \sum_{\symbphrase \in \symbsec} \simi(\symbimage, \symbphrase)   \\
&\qquad \leq \similarity(\symbsubsubimages \cup \{\symbimage\}, s) - \similarity(\symbsubsubimages, s)
\end{align}
\vspace{-3mm}
\end{proof}

Considering the submodularity of $\jointf$, we select images greedily, similarly to optimizing $\globalf$.
Once a set of images, $\symbsubimages$, is greedily computed for a section, we assign an image $\symbimage \in \symbsubimages$ to the subsection, $\symbsubsec$, which maximises $\similarity(\symbsubimages, \symbsubsec) + \beta \cdot \coverage(\symbsubimages, \symbsubsec)$; to the subsection in which the image $\symbimage$ covers the most concepts and has most similar text.
Note that the local and global assignments are specific cases of this formulation.
This formulation achieves our desideratum --- images are assigned to specific subsections also with global context consideration.

\section{Human Evaluation}
\label{sec:human_eval}

\begin{figure*}[htbp]
\centering
\fbox{\includegraphics[width=0.91\linewidth]{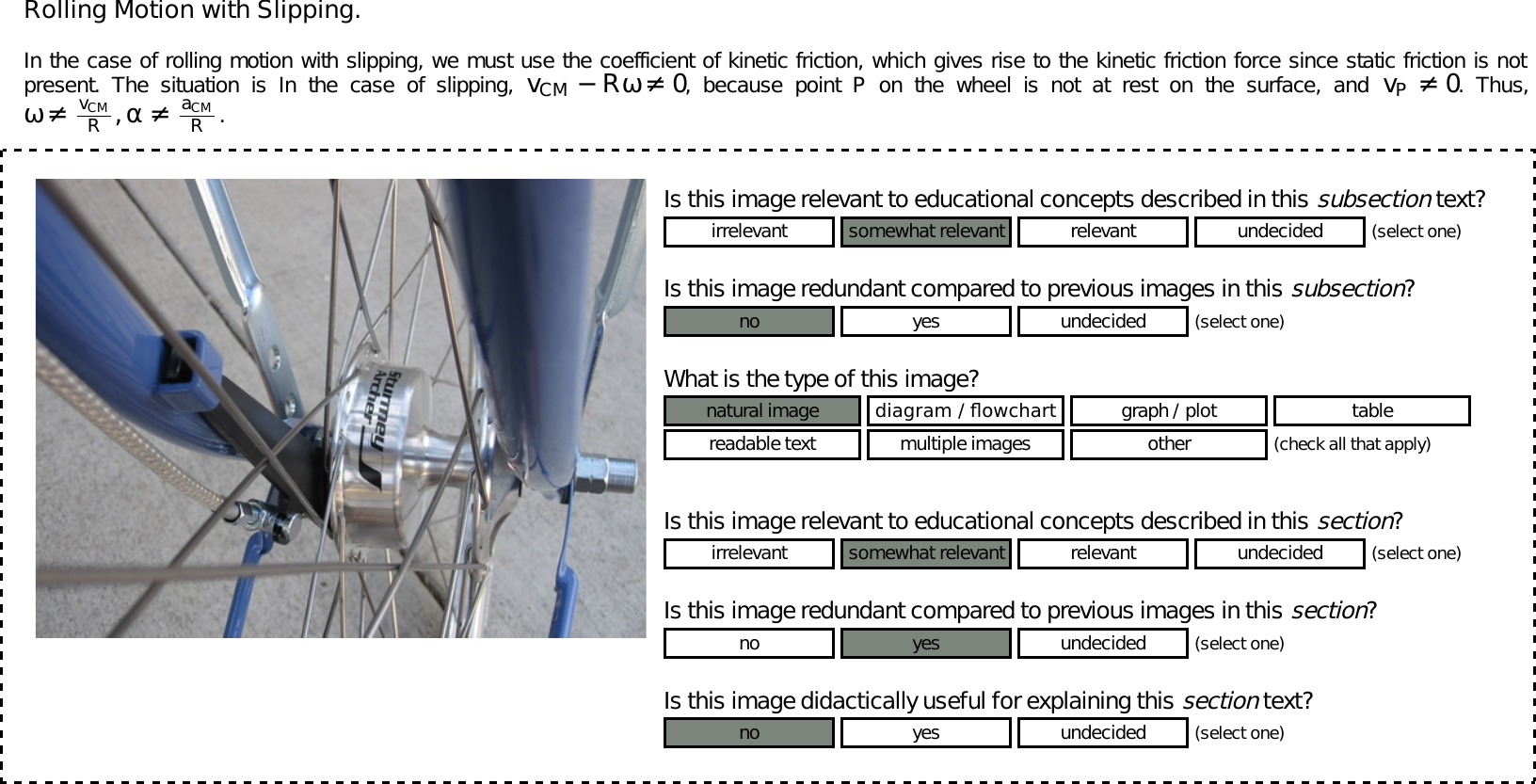}}
\vspace{-2mm}
\captionof{figure}{Annotation window for a single image within a subsection (there may be multiple images in a subsection). Note that, the image featured is not a suitable choice to illustrate this particular subsection.}
\label{fig:ui_example}
\vspace{-3mm}
\end{figure*}

The goal of improving textbooks is to help students learn.
Testing a wide range of models directly by monitoring learning progress would require a very expensive long-term evaluation.
Instead we turn to an intrinsic crowd-sourced evaluation where we ask teachers what they think about the qualities of the assignment.

\subsection{Setup}
We selected 32 crowd-workers from Prolific who are native English speakers and work in education.
We compare 4 different assignments: the gold one by a human and three automatic ones (\Cref{sec:assignment}).
Each participant is assigned a single section and evaluates all 4 systems on this section.
This methodology may cause an unwanted priming effect, which we address in \Cref{sec:evaluation_results}.
We chose this setup deliberately because a lot of the annotation time is spent on reading the section text and we wanted to share this cost by annotating multiple assignments at once.
Our evaluation consists of close-ended (limited number of answers) questions that pertain to both the local (subsection) and global (section) textual context (full annotation guidelines are in \Cref{sec:annotation_guidelines}):

\smallskip
\noindent\textbf{Local evaluation:}
\vspace*{-3mm}
\begin{itemize}[noitemsep]
\item Is this image relevant to educational concepts described in this subsection text?
\item Is this image redundant compared to previous images in this subsection?
\item What is the type of this image?
\end{itemize}

\noindent\textbf{Global evaluation:}
\vspace*{-3mm}
\begin{itemize}[noitemsep]
\item Is this image relevant to educational concepts described in this section?
\item Is this image redundant compared to previous images in this section?
\item Is this image didactically useful for explaining this section text?
\end{itemize}

The annotators first answer the local questions for all images and then the global questions.
This way we made sure that they scanned the entire section and had a some overview of all images and how they relate to each other before answering the global questions.
The annotation pipeline (for 4 assignments, local/global) is shown in \Cref{fig:annotation_pipeline} and the user-interface  in \Cref{fig:ui_example}.

\begin{figure}[htbp]
\includegraphics[width=\linewidth]{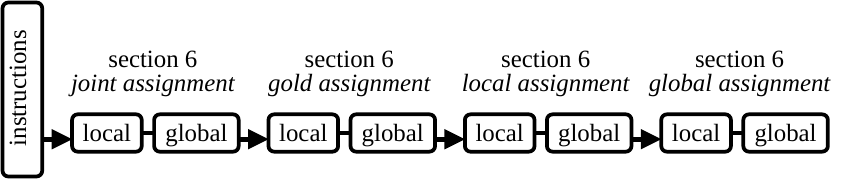}
\caption{The annotator is presented with the same section but with randomized image assignment order.}
\label{fig:annotation_pipeline}
\end{figure}

\begin{figure}[htbp]
\centering
\includegraphics[width=\linewidth]{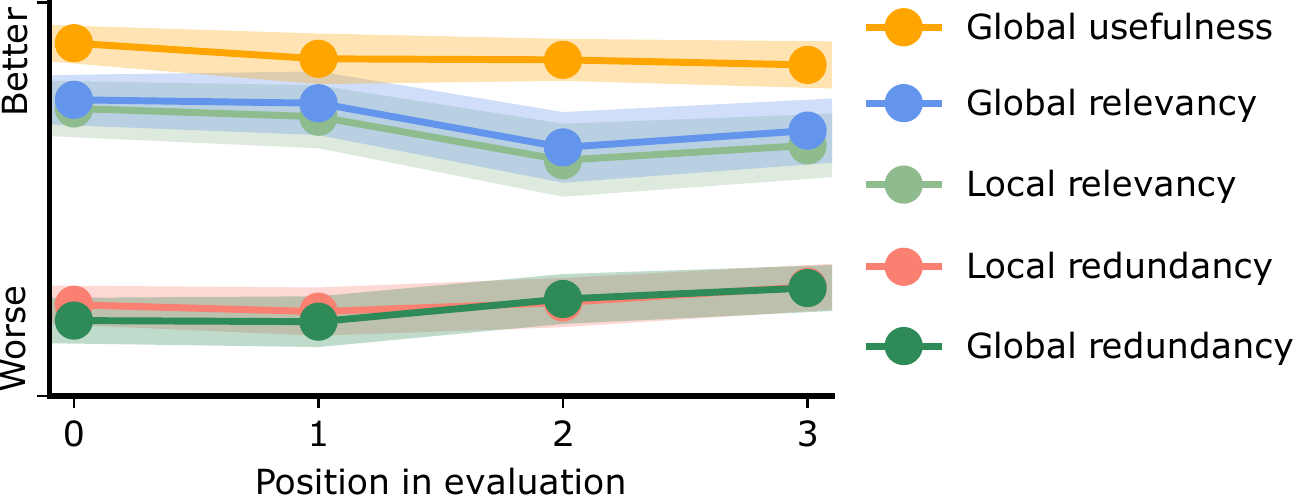}
\caption{Average score for all categories as dependency of the position in evaluation. Color bands show 90\% t-test confidence intervals.}
\label{fig:evaluation_pos_effect}
\end{figure}

\subsection{Evaluation Results}
\label{sec:evaluation_results}

We first verify the evaluation setup validity by checking whether the position in the evaluation queue has an effect on the evaluation scores.
Recall that the annotators were shown the same textbook section with 4 alternate images assignments.
While \Cref{fig:evaluation_pos_effect} shows some variance along the independent variable of evaluation position, the differences are not significant, justifying our evaluation setup.

We then focus on two most important evaluation criteria: relevancy and redundancy.
The results for those, shown in \Cref{fig:violin_relevancy}, clearly show preference for the Gold image assignment, suggesting that automatic assignment is still inferior to humans.
From the automatic methods, the Local performs the best relevancy-wise.
However, it is outperformed by Joint with respect to redundancy.
We also note that there is very little difference between Local and Global evaluation category.
This may be caused by evaluation bias (i.e. annotators are likely to give similar score for both local and global questions).

\begin{figure}[htbp]
\centering
\includegraphics[width=0.92\linewidth]{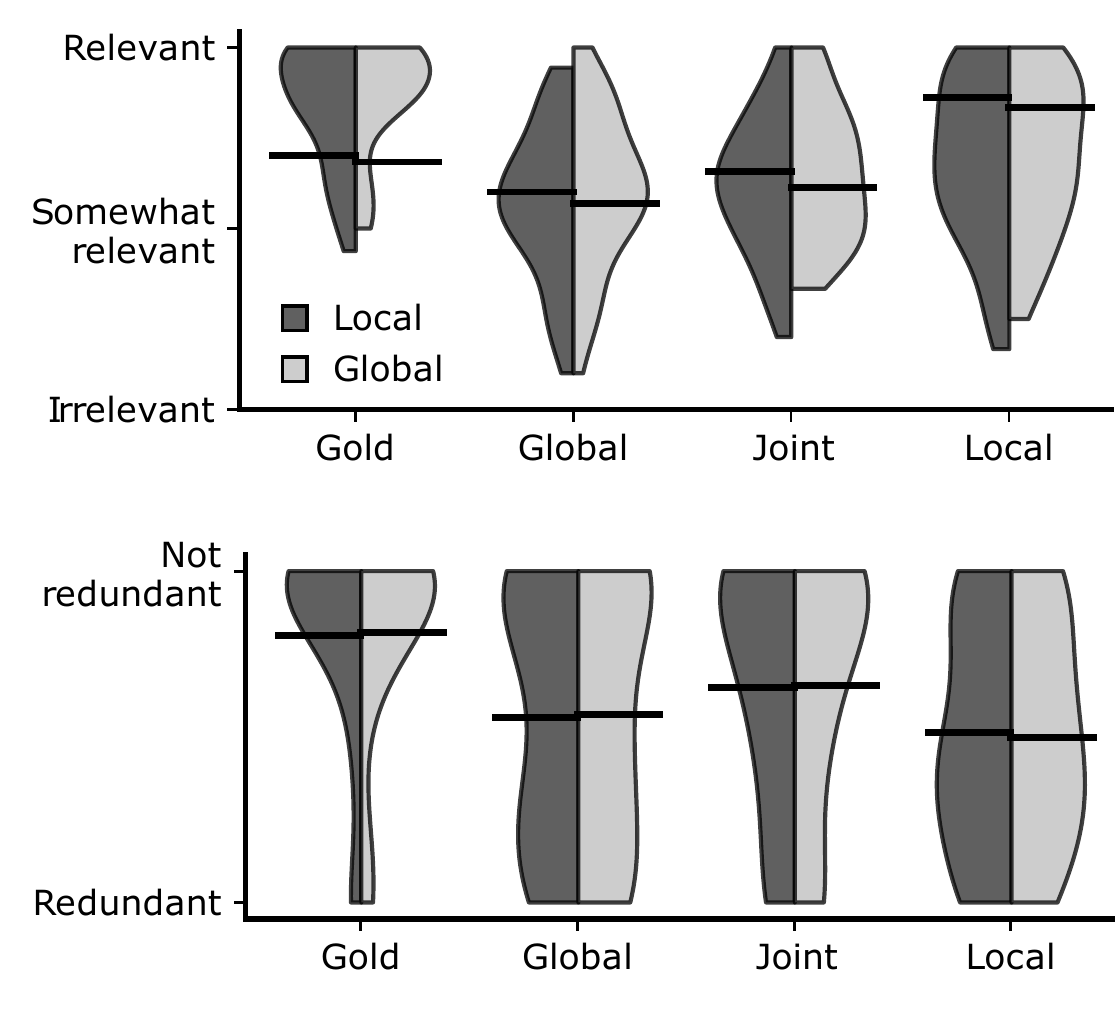}
\vspace{-2mm}

\caption{Local \& global relevancy and redundancy across 4 image assignment modes. Gold is overall the best, though Local is better at relevancy.}
\label{fig:violin_relevancy}
\vspace{-2mm}
\end{figure}

Next, we examine all the remaining evaluation categories in \Cref{tab:eval_categories_all}.
While Gold is the best across all, the significance of the difference varies.
The Joint assignment is never the worst, suggesting it to be a robust choice.

\begin{table}[htbp]
\resizebox{\linewidth}{!}{
\begin{tabular}{lr}
\toprule
\bf Evaluation Category \hspace{-5mm} & \bf Ordering \\
\midrule
Local relevancy & $ \text{Go} >_{0.001} \text{L} >_{0.075} \text{J} >_{0.163} \text{Gl}$ \\
Local redundancy & $ \text{Go} >_{0.008} \text{J} >_{0.288} \text{Gl} >_{0.293} \text{L}$ \\
Global relevancy & $ \text{Go} >_{0.001} \text{L} >_{0.179} \text{J} >_{0.106} \text{Gl}$ \\
Global redundancy & $ \text{Go} >_{0.011} \text{J} >_{0.337} \text{Gl} >_{0.307} \text{L}$ \\
Global usefulness & $ \text{Go} >_{0.001} \text{J} >_{0.383} \text{L} >_{0.211} \text{Gl}$ \\
\bottomrule
\end{tabular}
}
\caption{
Ordering of assignment modes with numbers showing one-sided Welsch's t-test p-value.
Gold - Go, L - Local, J - Joint, Gl - Global.
}
\label{tab:eval_categories_all}
\end{table}

\begin{figure}[htbp]
\centering
\includegraphics[width=0.96\linewidth]{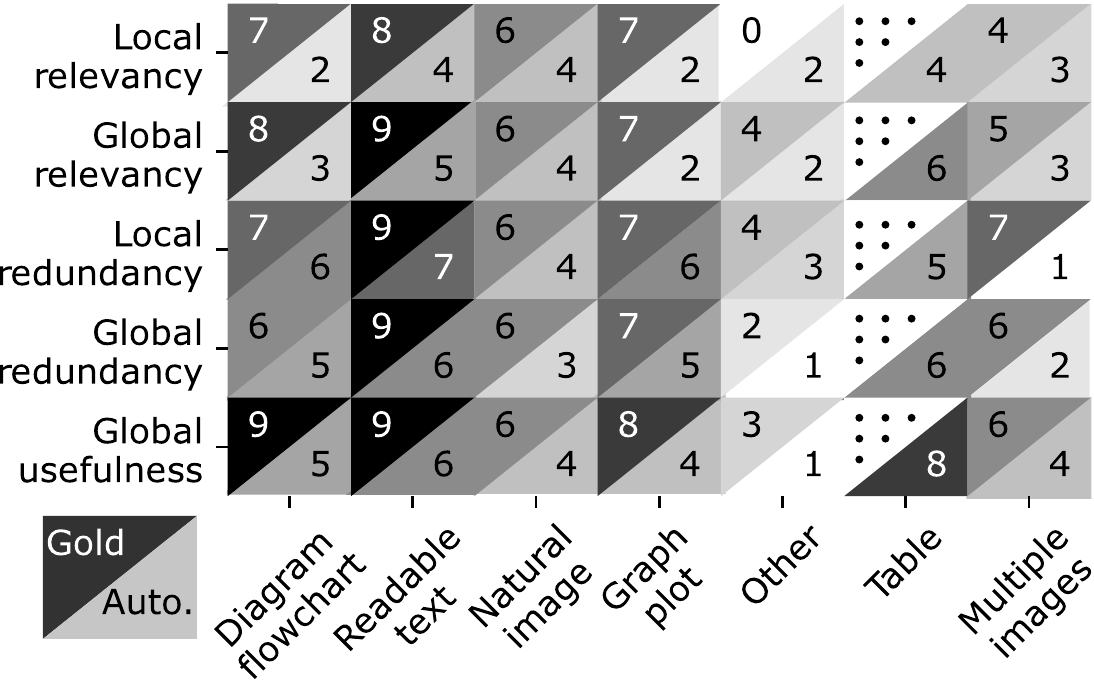}
\caption{Category-wise gold and automatic image assignment scores (0-9); higher scores indicate better performance. In each cell, the top-left corner displays the score for the gold assignment, while the bottom-right displays the score for the automatic assignment (aggregated across all methods).}
\label{fig:image_type_rating}
\end{figure}

\subsection{Qualitative Analysis}

\label{sec:qualitative}
The Joint optimization method aims to reduce the reader's cognitive load compared to the Local method that aggregates scores from all text-phrases. 
The Local method results in repetitive covering of a single concept from the section with top-images.
In contrast, Joint and Global assign images covering a wider range and variety of concepts in the section, enabling a greater level of text enrichment.
\Cref{sec:assigned_examples} shows examples of assignments by these approaches.
We remark that structured image types, such as graphs, multiple images, or those that are less identifiable, receive lower ratings systematically (\Cref{fig:image_type_rating}).
We now elaborate on the two major limitations in our models.

\paragraph{Varied domain of images.}
One limitation of our approach is demonstrated in \Cref{fig:knowledge-gap} where it struggles to model non-natural images such as diagrams, graphs, and plots.
These images often represent abstract concepts, relationships, and events which cannot be well modeled by models like CLIP that are majorly trained on natural images.

\paragraph{Long textual description of concepts.}
Another source of error was that some concepts had long textual descriptions.
For example, the description of \textit{Stokes' theorem} spans multiple paragraphs. 
Learning to associate image with a part of text may lead to loose and spurious associations, resulting in poor downstream assignment performance.
This highlights the need for vision-language representations that can effectively model long text descriptions and establish better image-text associations.

\begin{figure}[htbp]
  \begin{minipage}[b]{0.2\textwidth}
    \fbox{\includegraphics[width=\textwidth,valign=m,trim=0 0 0mm 1mm,clip]{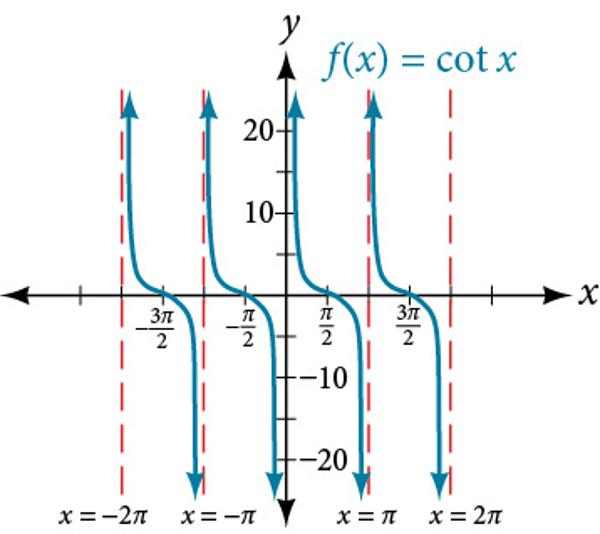}}
  \end{minipage}
  \hspace{2mm}
  \begin{minipage}[b]{0.24\textwidth}
    \fbox{\includegraphics[trim={50px 0 50px 0},clip,width=\textwidth,valign=m]{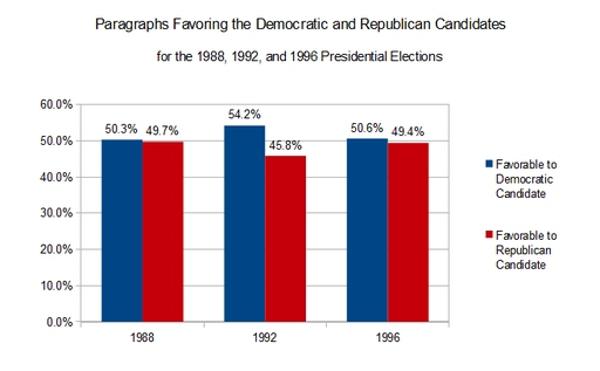}}
  \end{minipage}
    \caption{Our approach assigns left and right images to \textit{Stokes' Theorem} and \textit{Factors Affecting Engagement in Democracy}. In both cases, image is not specific enough and only very loosely relevant to the subsection text. This underscores the weakness of our model, which learns relevance with limited textual context.}
    \label{fig:knowledge-gap}
    \vspace{-4mm}
\end{figure}

\section{Conclusion and Future Work}
\label{sec:conclusion}
We presented a dataset and a new task of enriching textbooks with visuals from the web. We proposed several technical solutions for this problem using neural image retrievers combined with a new assignment optimization setup.
Annotations by workers in the education industry verified that, even though the human assignment is still of the highest quality, the automatic assignments are not far behind.
There are multiple venues for making further progress on this problem.
First, individual concept importances and text-image relevance models could be improved and plugged into the existing algorithms.
The varying domains of images and lengthy textual descriptions of concepts present challenges that could pave the way for exploring new approaches for learning image-text associations.
Professional textbook designers can be included to further refine the assignment optimization objective and pose this as a human AI collaboration problem.

\section*{Acknowledgments}
We thank the anonymous reviewers for their feedback on our paper. 
MS acknowledges support from the Swiss National Science Foundation (Project No. 197155), a Responsible AI grant by the Haslerstiftung; and an ETH Grant (ETH-19 21-1) for this work.

\section*{Limitations and Ethics Statement}

While automatically enhancing textbooks with images holds promise, we point out:

\begin{itemize}[topsep=0mm]%
\item \textbf{Image selection bias}:
Images from the web are at risk of being biased because they do not necessarily come from the same distribution as textbook graphics.
However, images from Wikipedia are possibly more suitable for this purpose because they are of encyclopedic nature.

\item \textbf{Intellectual property}:
Practitioners who use our automatic image assignment method for textbooks should take care to always follow the associated copyrights and attributions.

\item \textbf{Pedagogical usefulness}:
While we employed workers to intrinsically judge the quality of the assignments, the results should be replicated with an extrinsic evaluation (beyond the scope of our study) which also considers the impact on student learning and information retention.

\item \textbf{Quality control}:
The target audience of textbooks are students, who are a sensitive group.
In the current formulation, the optimization will always produce \textit{some} assignment but there is no mechanism for quality assurance.
This \textit{could} result in inappropriate images being assigned and expert human scrutiny should be employed.
\end{itemize}

\vspace{-6mm}
\bibliographystyle{misc/acl_natbib}
\bibliography{bibliography}

\appendix 

\begin{table}[htbp]
\vspace{-2mm}
\centering
\centering
\resizebox{\linewidth}{!}{
\begin{tabular}{lrrrrr}
\toprule
& \bf Coef. & \bf SE & \bf $\boldsymbol{T}$-Val. & $\boldsymbol{p}$ & \\
\midrule
\tt intercept & -0.279 & 0.020 & -13.753 & $\star\star$ \\
\tt concepts & -0.003 & 0.002 & -1.179 & \\ %
\tt concept\_mentions & 0.005 & 0.001 & 7.519 & $\star\star$ \\ %
\tt words & 0.001 & 0.000 & 27.405 & $\star\star$ \\ %
\tt paragraphs & 0.011 & 0.003 & 3.879 & $\star\star$ \\ %
\tt \%sec\_concepts & -0.001 & 0.001 & -1.065 & \\ %
\tt \%sec\_c.\_mentions & 0.002 & 0.001 & 2.417 & $\star$ \\ %
\tt \%sec\_words & 0.002 & 0.002 & 1.591 & \\ %
\tt \%sec\_paragraphs & 0.002 & 0.002 & 1.438 & \\ %
\tt position & -0.001 & 0.000 & -5.630 & $\star\star$ \\
\tt subject\_math & 0.260 & 0.021 & 12.201 & $\star\star$ \\
\tt subject\_science & 0.520 & 0.018 & 28.817 & $\star\star$ \\
\tt subject\_social & 0.152 & 0.022 & 6.83 & $\star\star$ \\ %
\bottomrule
\end{tabular}
}
\caption{Regression analysis for predicting the number of \texttt{images} in a subsection. The following variables are significant predictors: \texttt{concept\_mentions}, \texttt{words}, \texttt{paragraphs}, \texttt{position}, and \texttt{subject} of a subsection. \textbf{SE} is standard error, $\star\star$ is $p<10^{-5}$, and $\star$ is $p<0.05$.}
\label{fig:regression_coeff}
\vspace{-5mm}
\end{table}

\section{Annotation Guidelines}
\label{sec:annotation_guidelines}

\definecolor{AnnotationGray}{rgb}{0.2,0.2,0.2}
{\color{AnnotationGray}\fontsize{9}{9}\selectfont
The goal of your task is to help us evaluate our research on Artificial Intelligence (AI) based methods for creating more engaging and effective textbooks for students.
You will be evaluating the performance of various AI-based methods that use images obtained from the internet to assign images to sections of a children’s textbook.

In the first stage of your evaluation, you will be provided with text from a section of the textbook that has been divided into multiple subsections.
Each subsection will be enclosed within a bounding box and will include images at the end. Your task is to evaluate the relevance, usefulness, and type of each image with respect to the text and images only within the same subsection.

After you finish answering questions in the first stage, new questions will appear with all the images (before and after shown in below and above image), these will be about second stage of evaluation. In the second stage of the evaluation, you will be reviewing the same subsections and images as before.
Your task is to evaluate the relevance, redundancy and usefulness of each image in relation to the text and images within all the subsections in this section. Here, you will be asked to answer the new questions.
Note: Please make sure to read the subsections that do not have any images during the first stage, as they will be useful for answering the questions in the second stage.

\noindent
Choices concerning relevancy questions:

\begin{itemize}[noitemsep,topsep=0mm]
\item \textbf{relevant}: some concept described in the subsection/section text is picturised in the image;
\item \textbf{somewhat-relevant}: image is loosely on the same subject/topic domain as the subsection/section text, but is not exactly about the same concepts;
\item \textbf{irrelevant}: not at all related to the subsection/section text.
\item \textbf{undecided}: unable to make a decision, maybe because of my low understanding of the text or image.
\end{itemize}

\noindent
Choices for redundancy questions:

\begin{itemize}[noitemsep,topsep=0mm]
\item \textbf{yes}: this image illustrates exactly the same concept as one of the previously shown images in the section/subsection;
\item \textbf{no}: this image illustrates atleast some unique concepts as compared to the previously shown images in the section/subsection;
\item \textbf{undecided}: unable to make a decision, maybe because I do not understand the concepts illustrated by this image.
\end{itemize}

\noindent
Choices for the didactic-usefulness questions:

\begin{itemize}[noitemsep,topsep=0mm]
\item \textbf{yes}: image will be appropriate for teaching the section to a student - I would like it here (with an appropriate caption) if I was learning from this textbook;
\item \textbf{no}: image is not appropriate for teaching the section text;
\item \textbf{undecided}: unable to make the decision, maybe because of my less understanding of the text or image.''
\end{itemize}

}

\section{Images Assigned with Joint}
\label{sec:assigned_examples}

\begin{itemize}[noitemsep,topsep=0mm]

\item \textit{Section} \href{https://openstax.org/books/american-government-3e/pages/1-3-engagement-in-a-democracy}{``Engagement in a Democracy'' from textbook ``American Government 3e''}
\begin{itemize}[noitemsep,topsep=0mm]
\item \textit{Subsection} ``Why Get Involved?'' -- Fig.~\ref{fig:assign_example_1}
\item \textit{Sub.} ``Pathways to Engagement'' -- Fig.~\ref{fig:assign_example_2}
\item \textit{Subsection} ``Factors of Engagement'' -- Fig.~\ref{fig:assign_example_3}
\end{itemize}

\item \textit{Section} \href{https://openstax.org/books/biology-2e/pages/39-1-systems-of-gas-exchange}{``Systems of Gas Exchange'' taken from textbook ``Biology 2e''}
\begin{itemize}[noitemsep,topsep=0mm]
\item \textit{Subsection} ``Learning Objectives'' -- Fig.~\ref{fig:assign_example_6}.
\item \textit{Subsection} ``Direct Diffusion'' -- Fig.~\ref{fig:assign_example_7}.
\item \textit{Subsection} ``Skin and Gills'' -- Fig.~\ref{fig:assign_example_8}.
\item \textit{Subsection} ``Tracheal Systems'' -- Fig.~\ref{fig:assign_example_9}.
\item \textit{Subsection} ``Mammalian Systems'' -- Fig.~\ref{fig:assign_example_10}.
\item \textit{Sub.} ``Lungs: Bronchi and Alveoli'' -- Fig.~\ref{fig:assign_example_11}.
\end{itemize}

\item \textit{Section} \href{https://openstax.org/books/calculus-volume-3/pages/6-7-stokes-theorem}{``Stokes’ Theorem'' taken from textbook ``Calculus Volume 3''}
\begin{itemize}[noitemsep,topsep=0mm]
\item \textit{Subsection} ``Stokes’ Theorem'' -- Fig.~\ref{fig:assign_example_13}.
\item \textit{Subsection} ``Stokes’ Theorem Proof'' -- Fig.~\ref{fig:assign_example_14}.
\item \textit{Subsection} ``Interpretation of Curl'' -- Fig.~\ref{fig:assign_example_15}.
\end{itemize}

\item \textit{Section} \href{https://openstax.org/books/business-ethics/pages/4-1-corporate-law-and-corporate-responsibility}{``Corporate Law and Corporate Responsibility'' taken from textbook ``Business Ethics''}
\begin{itemize}[noitemsep,topsep=0mm]
\item \textit{Subsection} ``The Advantages of Corporate Status'' -- Fig.~\ref{fig:assign_example_16}.
\item \textit{Subsection} ``Balancing the Many Responsibilities of a Corporation'' -- Fig.~\ref{fig:assign_example_17}.
\item \textit{Subsection} ``The Two Sides of the Corporate Responsibility Debate'' -- Fig.~\ref{fig:assign_example_18}.
\end{itemize}

\end{itemize}

\begin{figure}[H]
    \vspace{-5mm}
  \centering
  \begin{minipage}[b]{0.28\textwidth}
    \fbox{\includegraphics[width=\textwidth,valign=m,trim={50px 20px 30px 10px},clip]{assigned-images/american_government_3e-1-3-2-0.jpg}}
  \end{minipage}
  \caption{Image assignments for the \href{https://openstax.org/books/american-government-3e/pages/1-3-engagement-in-a-democracy}{``Why Get Involved?'' subsection} in ``Engagement in a Democracy'' section of the textbook ``American Government 3e''.}
  \label{fig:assign_example_1}
  \vspace{-5mm}
\end{figure}

\begin{figure}[H]
  \centering
  \begin{minipage}[b]{0.20\textwidth}
    \fbox{\includegraphics[width=\textwidth,valign=m]{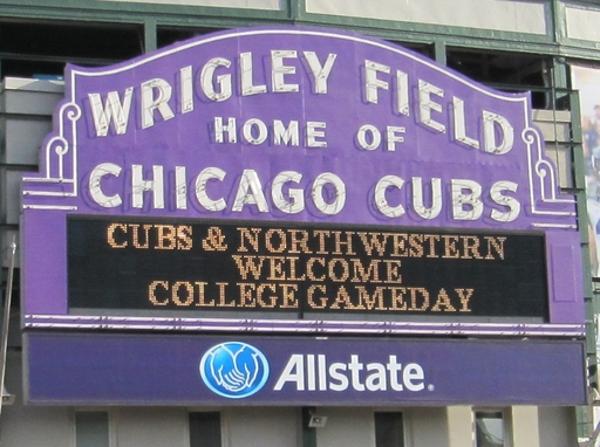}}
  \end{minipage}
  \hspace{1mm}
  \begin{minipage}[b]{0.19\textwidth}
    \fbox{\includegraphics[width=\textwidth,valign=m]{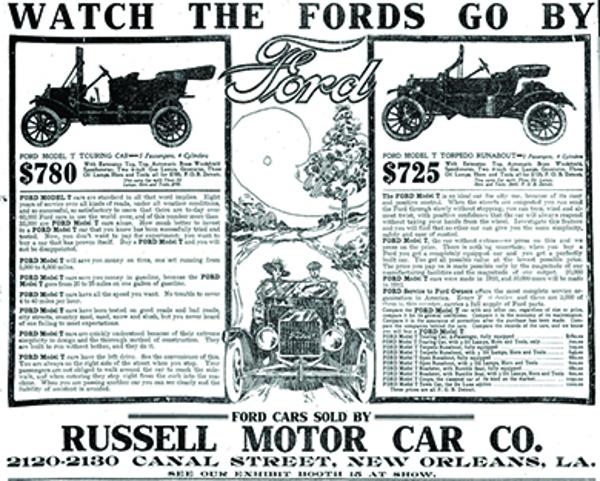}}
  \end{minipage}
  \caption{Image assignments for the \href{https://openstax.org/books/business-ethics/pages/4-1-corporate-law-and-corporate-responsibility}{``The Advantages of Corporate Status'' subsection} in the ``Corporate Law and Corporate Responsibility'' section of the textbook ``Business Ethics''.}
  \vspace{-5mm}
  \label{fig:assign_example_16}
\end{figure}

\begin{figure}[H]
  \centering
  \begin{minipage}[b]{0.23\textwidth}
    \fbox{\includegraphics[width=\textwidth,valign=m]{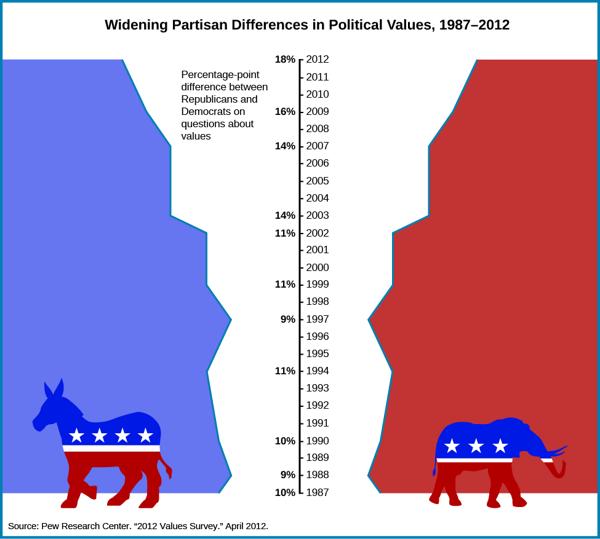}}
  \end{minipage}
  \smallskip
  
  \begin{minipage}[b]{0.23\textwidth}
    \fbox{\includegraphics[height=2.3cm,valign=m]{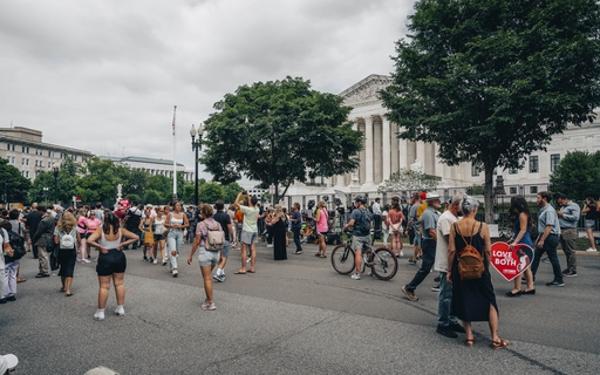}}
  \end{minipage}
  \hspace{1mm}
  \begin{minipage}[b]{0.23\textwidth}
    \fbox{\includegraphics[height=2.3cm,valign=m]{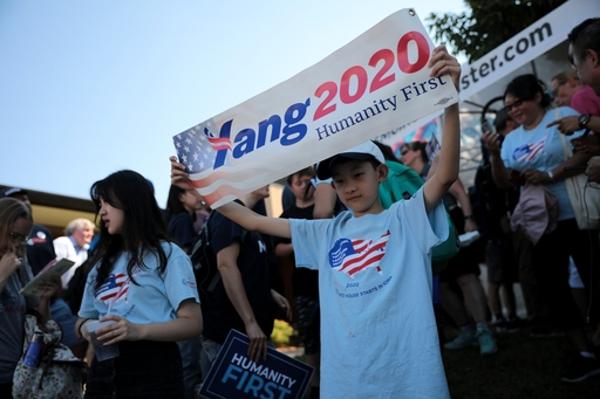}}
  \end{minipage}
  \caption{Image assignments for the \href{https://openstax.org/books/american-government-3e/pages/1-3-engagement-in-a-democracy}{``Pathways to Engagement'' subsection} in ``Engagement in a Democracy'' section of textbook ``American Government 3e''.}
  \label{fig:assign_example_2}
  \vspace{-3mm}
\end{figure}

\begin{figure}[H]
  \centering
  \begin{minipage}[b]{0.25\textwidth}
    \fbox{\includegraphics[width=\textwidth,valign=m]{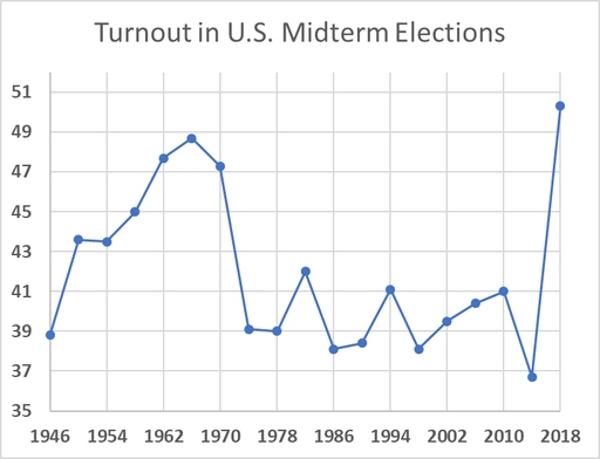}}
  \end{minipage}
  \caption{Image assignments for the \href{https://openstax.org/books/american-government-3e/pages/1-3-engagement-in-a-democracy}{``Factors of Engagement'' subsection} in ``Engagement in a Democracy'' section of the textbook ``American Government 3e''.}
  \label{fig:assign_example_3}
  \vspace{-3mm}
\end{figure}

\begin{figure}[H]
  \centering
  \begin{minipage}[b]{0.23\textwidth}
    \fbox{\includegraphics[width=\textwidth,valign=m]{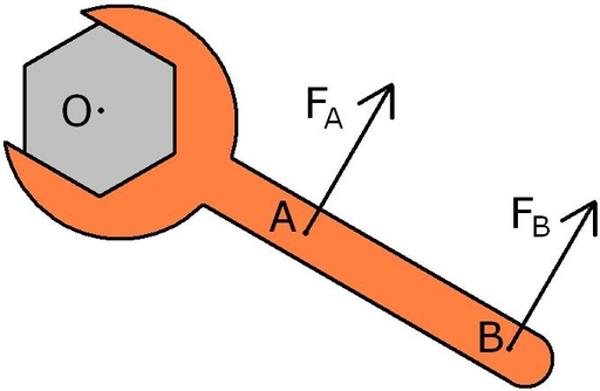}}
  \end{minipage}
  \hfill
  \begin{minipage}[b]{0.21\textwidth}
    \fbox{\includegraphics[width=\textwidth,valign=m]{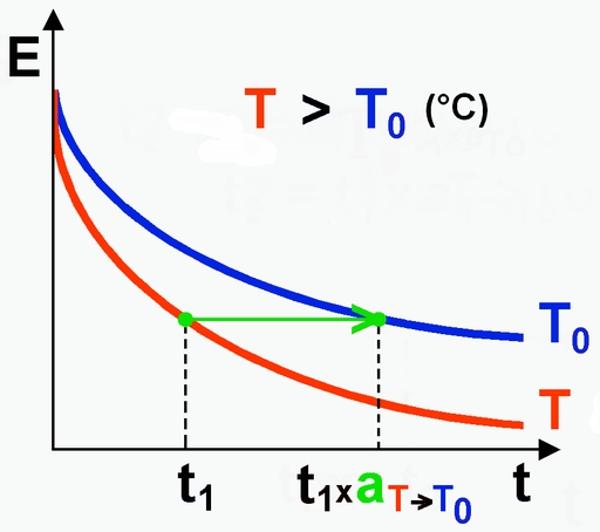}}
  \end{minipage}
  \caption{Image assignments for the \href{https://openstax.org/books/calculus-volume-3/pages/6-7-stokes-theorem}{``Interpretation of Curl'' subsection} in the ``Stokes’ Theorem'' section of the textbook ``Calculus Volume 3''.}
  \label{fig:assign_example_15}
\end{figure}

\begin{figure}[H]
  \centering
  \begin{minipage}[b]{0.33\textwidth}
    \fbox{\includegraphics[width=\textwidth,valign=m]{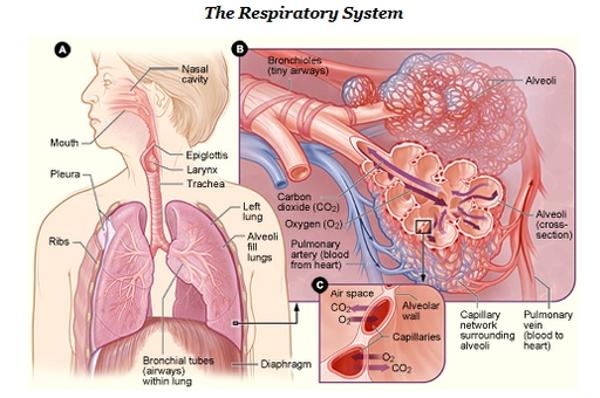}}
  \end{minipage}
  \caption{Image assignment for the \href{https://openstax.org/books/biology-2e/pages/39-1-systems-of-gas-exchange}{``Learning Objectives'' subsection} in the ``Systems of Gas Exchange'' section of the textbook ``Biology 2e''.}
  \label{fig:assign_example_6}
\end{figure}

\begin{figure}[H]
  \centering
  \begin{minipage}[b]{0.38\textwidth}
    \fbox{\includegraphics[width=\textwidth,valign=m]{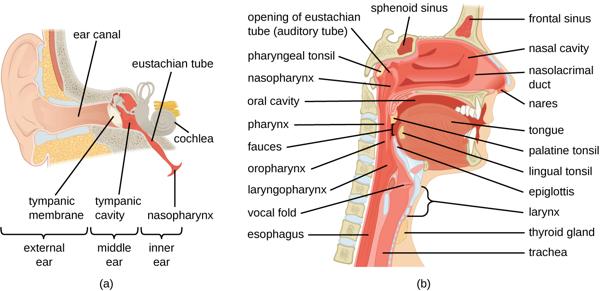}}
  \end{minipage}
  \caption{Image assignment for the \href{https://openstax.org/books/biology-2e/pages/39-1-systems-of-gas-exchange}{``Mammalian System'' subsection} in the ``Systems of Gas Exchange'' section of the textbook ``Biology 2e''.}
  \label{fig:assign_example_10}
\end{figure}

\begin{figure}[H]
  \centering
  \begin{minipage}[b]{0.38\textwidth}
    \fbox{\includegraphics[width=\textwidth,valign=m]{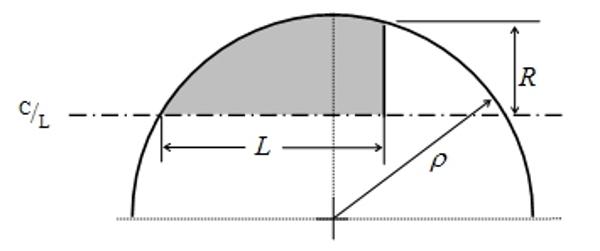}}
  \end{minipage}
  \caption{Image assignments for the \href{https://openstax.org/books/calculus-volume-3/pages/6-7-stokes-theorem}{``Stokes’ Theorem'' subsection} in the ``Stokes’ Theorem'' section of the textbook ``Calculus Volume 3''.}
  \label{fig:assign_example_13}
\end{figure}

\begin{figure}[H]
  \centering
  \begin{minipage}[b]{0.35\textwidth}
    \fbox{\includegraphics[width=\textwidth,valign=m]{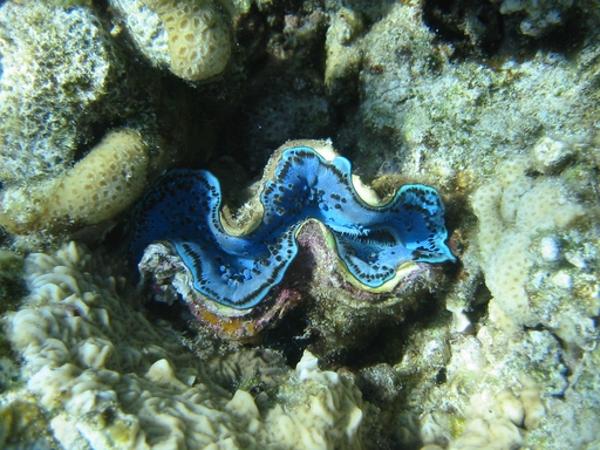}}
  \end{minipage}
  \caption{Image assignments for the \href{https://openstax.org/books/biology-2e/pages/39-1-systems-of-gas-exchange}{``Direct Diffusion'' subsection} in the ``Systems of Gas Exchange'' section of the textbook ``Biology 2e''.}
  \label{fig:assign_example_7}
\end{figure}

\begin{figure}[H]
  \centering
  \begin{minipage}[b]{0.23\textwidth}
    \fbox{\includegraphics[width=\textwidth,valign=m]{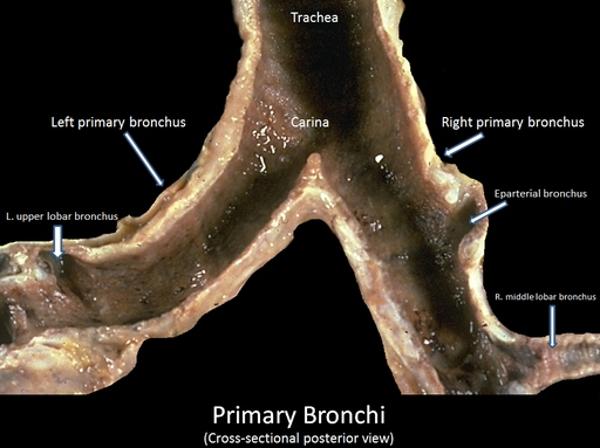}}
  \end{minipage}
  \hspace{1mm}
  \begin{minipage}[b]{0.23\textwidth}
    \fbox{\includegraphics[width=\textwidth,valign=m]{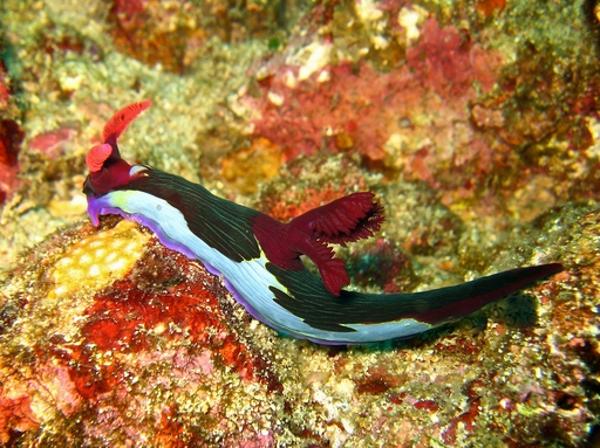}}
  \end{minipage}\\[0.03\textwidth]
  \caption{Image assignments for the \href{https://openstax.org/books/biology-2e/pages/39-1-systems-of-gas-exchange}{``Skin and Gills'' subsection} in the ``Systems of Gas Exchange'' section of the textbook ``Biology 2e''.}
  \label{fig:assign_example_8}
\end{figure}

\begin{figure}[H]
  \centering
  \begin{minipage}[b]{0.3\textwidth}
    \fbox{\includegraphics[width=\textwidth,valign=m]{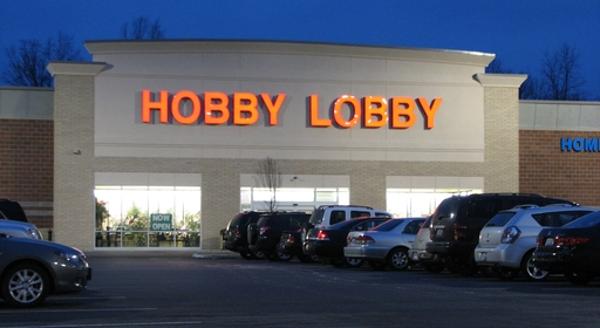}}
  \end{minipage}
  \caption{Image assignments for the \href{https://openstax.org/books/business-ethics/pages/4-1-corporate-law-and-corporate-responsibility}{``Balancing the Many Responsibilities of a Corporation'' subsection} in the ``Corporate Law and Corporate Responsibility'' section of the textbook ``Business Ethics''.}
  \label{fig:assign_example_17}
\end{figure}

\begin{figure}[H]
  \centering
  \begin{minipage}[b]{0.38\textwidth}
    \fbox{\includegraphics[width=\textwidth,valign=m]{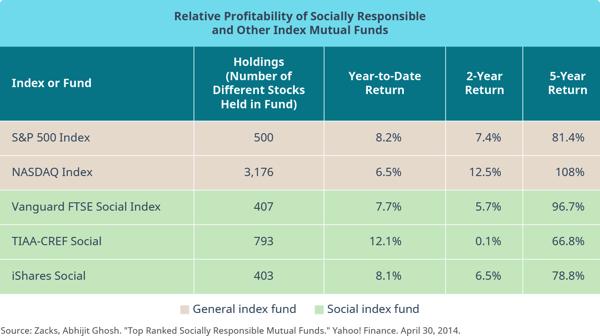}}
  \end{minipage}
  \caption{Image assignments for the \href{https://openstax.org/books/business-ethics/pages/4-1-corporate-law-and-corporate-responsibility}{``The Two Sides of the Corporate Responsibility Debate'' subsection} in the ``Corporate Law and Corporate Responsibility'' section of the textbook ``Business Ethics''.}
  \label{fig:assign_example_18}
\end{figure}

\begin{figure}[H]
  \centering
  \begin{minipage}[b]{0.35\textwidth}
    \fbox{\includegraphics[width=\textwidth,valign=m]{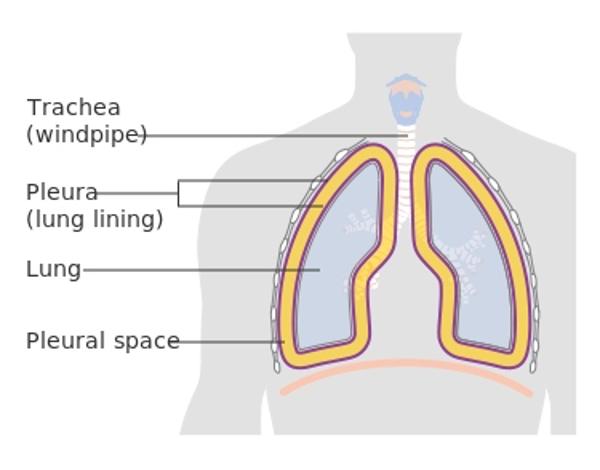}}
  \end{minipage}
  \caption{Image assignments for the \href{https://openstax.org/books/biology-2e/pages/39-1-systems-of-gas-exchange}{``Tracheal Systems'' subsection} in the ``Systems of Gas Exchange'' section of the textbook ``Biology 2e''.}
  \label{fig:assign_example_9}
\end{figure}

\begin{figure}[H]
  \centering
  \begin{minipage}[b]{0.23\textwidth}
    \fbox{\includegraphics[height=3cm,valign=m]{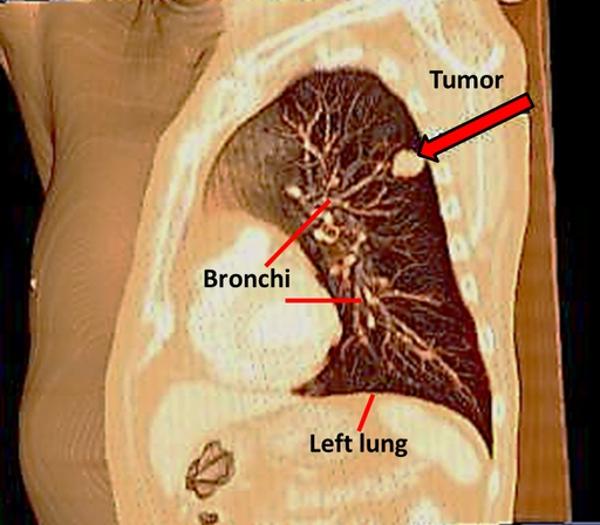}}
  \end{minipage}
  \hspace{1mm}
  \begin{minipage}[b]{0.23\textwidth}
    \fbox{\includegraphics[height=3cm,valign=m]{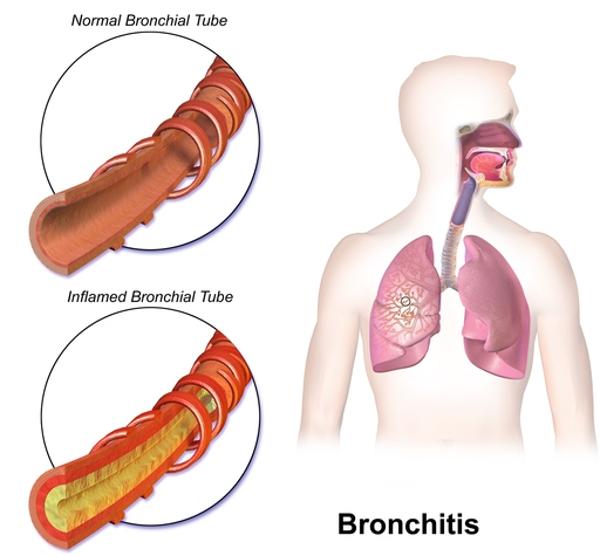}}
  \end{minipage}\\[0.03\textwidth]
  \caption{Image assignments for the \href{https://openstax.org/books/biology-2e/pages/39-1-systems-of-gas-exchange}{``Lungs: Bronchi and Alveoli'' subsection} in the ``Systems of Gas Exchange'' section of the textbook ``Biology 2e''.}
  \label{fig:assign_example_11}
\end{figure}

\begin{figure}[H]
  \centering
  \begin{minipage}[b]{0.22\textwidth}
    \fbox{\includegraphics[width=\textwidth,valign=m]{assigned-images/calculus_3-6-7-4-0.jpg}}
  \end{minipage}
  \hspace{1mm}
  \begin{minipage}[b]{0.24\textwidth}
    \fbox{\includegraphics[width=\textwidth,valign=m]{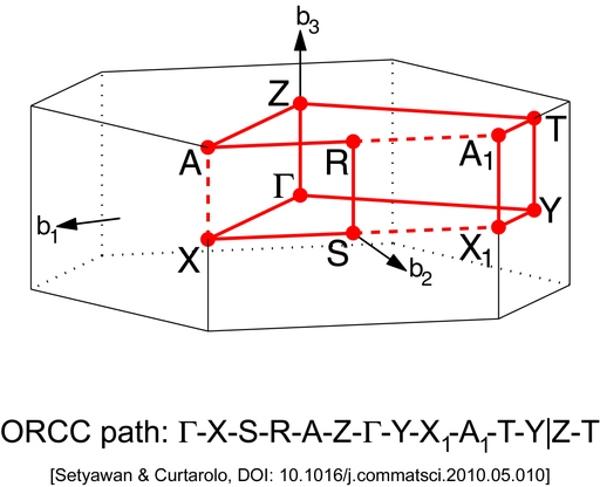}}
  \end{minipage}

  \smallskip
  
  \begin{minipage}[b]{0.29\textwidth}
    \fbox{\includegraphics[width=\textwidth,valign=m]{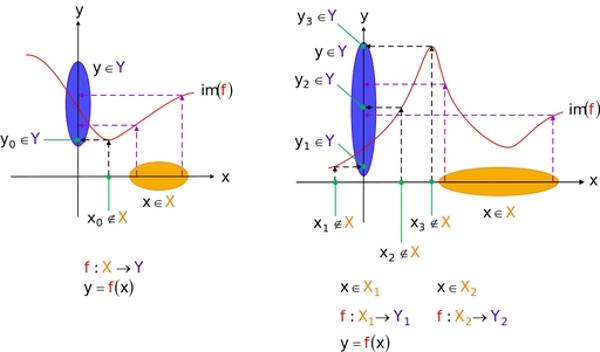}}
  \end{minipage}
  
  \caption{Image assignments for the \href{https://openstax.org/books/calculus-volume-3/pages/6-7-stokes-theorem}{``Stokes’ Theorem Proof'' subsection} in the ``Stokes’ Theorem'' section of the textbook ``Calculus Volume 3''.}
  \label{fig:assign_example_14}
\end{figure}

\section{CLIP Fine-tuning}
The CLIP model is composed of two parts: (1) the Image Encoder responsible for encoding the $p^\text{th}$ input image into a 512-dimensional vector $I_p$; and (2) the Text Encoder that encodes the $q^\text{th}$ input text into a 512-dimensional vector $T_q$. 
The model is trained with contrastive loss on a dataset of 400 million image-text pairs from the web.
During the training, relevant image-caption pairs maximize $I_p \cdot T_q$ while $I_p \cdot T_q$ is minimize for unrelated $I_{p'}$ and $T_{q'}$.
During our fine-tuning, we create mini-batches of image-text pairs extracted from the \textit{subsections} in OpenStax Books Dataset.

During inference, we encode all images in the Image Bank as $\{I_1, I_2, \ldots, I_N\}$ and all text-queries belonging to a particular \textit{subsection} as $\{T_1, T_2, \ldots, T_M\}$.
Next, we get similarity scores for all images in the Image Bank with respect to the $k^{th}$ text-query by calculating $S^k = \langle I_1 \cdot T_k, I_2 \cdot T_k, \ldots, I_N \cdot T_k\rangle$. 
We then normalize $S^k$ to get $P^k = \textsc{softmax}(S^k)$. 
Finally, for each \textit{subsection}, we compute the relevance scores of all images in the Image Bank by aggregating the $P^k$ values, resulting in $P = \textsc{agg}(P^1, P^2, \ldots, P^M)$, where $\textsc{agg}$ is an aggregate function such as the mean of $N$-dimensional vectors.

In search for the best image retrieval model, we provide details for various methods of creating image-text pairs from a subsection.

\subsection{Evaluation Metrics}
We now explain the techniques for the retrieval model evaluation.
We consider the images initially assigned in the same subsection as gold-images for that particular subsection.
Therefore, for each subsection, the Image Bank is categorized into \textit{gold-images} and \textit{not gold-images}.
To gauge the retrieval quality of a subsection from a given retrieval approach, we use these metrics.
Ultimately we use the average of all of them.
\begin{itemize}
    \item \textbf{Recall@K}: fraction of gold-images retrieved in top-K retrievals.
    \item \textbf{Recall@R}: fraction of gold-images retrieved in top-R retrievals; where R is the number of gold-images in the subsection.
    \item \textbf{Precision@K}: fraction of retrieved images (K in total) that are gold-images.
    \item \textbf{Precision@R}: fraction of retrieved images (R in total) which are gold-images; where R is the number of gold-images in the subsection.
    \item \textbf{Mean Gold Rank}: average rank of each gold image given relevancy sorting.
\end{itemize}

\subsection{Zero-Shot CLIP}
\label{subsec: zero-shot CLIP}
In the experiments discussed of this section, we use a pre-trained CLIP (without fine-tuning) to fetch relevant images for each subsection.
The experiment results are presented in \Cref{tab:results_zs_clip}.
For inference, we adopt a similar approach to the one described earlier.
Below, we elaborate on the main differences in experimental settings for these studies:

\begin{enumerate}
    \item \textbf{Concepts}: We use all the \textit{concepts} from a subsection as its \textit{text-queries} (each concept is a separate text-query) and use mean for $\textsc{agg}$.
    
    \item \textbf{Clustered-Concepts}: We use all the \textit{concepts} from a subsection as its \textit{text-queries} (each concept is a separate text-query).
    Next, we form clusters of text queries using $k$-means ($k=10$) clustering on text encodings.
    For inference, we apply $\textsc{agg}$ on the relevance scores $P$ to text-queries belonging to each cluster.
    Using the cluster's aggregated relevance score, we retrieve one image at a time from each cluster in a round-robin fashion.
    This experiment tests if giving equal importance to various ``concepts-clusters'' leads to increased variation in retrieved results and better performance.
    
    \item \textbf{Concatenated-Concepts}: We concatenate different \textit{concepts} from a subsection together and use these concatenated phrases as the text-queries for each \textit{section} and use mean as the $agg$ function.
    This experiment tests if giving more context (multiple terms provide better context about the \textit{section} content) with input text-queries improves the performance.

    \item \textbf{Phrases}: We follow the same setting as Exp C.2.1, except with overlapping phrases (each phrase is a separate text-query) drawn from subsection's raw-text instead of the \textit{concepts}. 
\end{enumerate}

\subsection{Fine-tuned CLIP}

We now finetune CLIP with the experiment results are shown in \Cref{tab:results_ft_clip}.

\begin{enumerate}
    \item \textbf{Concepts}: \textit{Fine-tuning}: Let $\{i_1, i_2, \ldots , i_n\}$ and $\{t_1, t_2, \ldots , t_m\}$ respectively be the set of images and concepts included in a \textit{subsection} from OpenStax Books Dataset. We fine-tune CLIP using the following such `correct' image-text pairs from all the sections in the \textit{train split} of OpenStax Books Dataset: $\{(i_1, t_1), (i_1, t_2), \ldots,$ $ (i_1, t_m), (i_2, t_1), (i_2, t_2),$ $\ldots  (i_2, t_m),$ $\ldots (i_n, t_1), \ldots, $ $(i_n, t_m)\}$.\\
    \textit{Inference}: We follow the exact same setting as Exp C.2.1 for inference; except that we compute scores across different data \textit{splits}.
    \item \textbf{Concatenated Concepts}: We follow the same setting as Exp C.3.1 except that, during both fine-tuning and inference, we concatenate concepts together to form text-queries for encoding (like in Exp C.2.3).
    
    \item \textbf{Phrases}: We follow the same setting as Exp C.3.1, except that, both during fine-tuning and inference we use overlapping phrases drawn from subsection's raw-text instead of the \textit{concepts}. 
    
    \item \textbf{Pre-fine-tune}: We pre-fine-tune the CLIP model on image-caption pairs from the Wikipedia images dataset. Following this, we perform fine-tuning and inference the same as that described in Exp C.3.2.

    \item \textbf{Frozen Encoders}: We freeze the parameters of both text and image encoders except the last linear layers. Next, we carry out fine-tuning and inference same as Exp C.3.2.
    
    \item \textbf{Frozen Encoders}: We freeze the parameters of both text and image encoders except the last linear and self-attention layers. Next, we carry out fine-tuning and inference same as Exp C.3.2.
    
\end{enumerate}

\begin{table}[t!]
\begin{center}
\setlength{\tabcolsep}{3pt}%
\resizebox{\linewidth}{!}{
\begin{tabular}{lccccccccccc}

\toprule

\multirow{2}{*}{ \hspace{-0.3cm} \vspace{-0.1cm} \textbf{Exp \#}} & \multicolumn{5}{c}{\textbf{Precision}} & \multicolumn{5}{c}{\textbf{Recall}} & \multirow{2}{*}{\vspace{-0.1cm}\shortstack[c]{\bf Gold\\ \bf Rank}}\\

\cmidrule(lr){2-6} \cmidrule(lr){7-11}

 & @1 & @5 & @20 & @100 & @R & @1 & @5 & @20 & @100 & @R & \\

\arrayrulecolor{black}\midrule 

\textbf{C.2.1} & 4.38 &		3.02	&		1.57	&		0.64 &	3.21 &	1.54 &		5.24	&		9.68	&		18.53 &	3.21 &	22529 \\

\arrayrulecolor{black!50}\midrule

\textbf{C.2.2} &
2.07&		1.88	&		1.22	&		0.54&	1.96&	0.75&		3.35		&	7.83	&		15.76&	1.96&	22973 \\

\arrayrulecolor{black!50}\midrule

\textbf{C.2.3} & 6.34	& 3.36& 	1.79	& 0.67	& 3.76	& 2.23	& 6.52& 	12.48	& 20.99	& 3.76& 	19236 \\

\arrayrulecolor{black!50}\midrule

\textbf{C.2.4} & 3.93	&	2.52		&	1.35	&		0.60&	2.99&	1.50	&	4.42	&		8.66&			17.47&	2.99&	17325 \\

\arrayrulecolor{black}\bottomrule 
\end{tabular}
}
\end{center}

\caption{Zero-shot CLIP results. Precision and recall scores (within the range of 0 to 1) are normalized to a scale of 0 to 100 for better readability.
The worst possible gold rank is approximately 312K.}
\label{tab:results_zs_clip}
\end{table}

\begin{table}[t!]
\begin{center}
\setlength{\tabcolsep}{3pt}%
\resizebox{\linewidth}{!}{
\begin{tabular}{llccccccccccc}

\toprule

\multirow{2}{*}{ \hspace{-0.3cm} \vspace{-0.1cm} \textbf{Exp \#}} & \multirow{2}{*}{  \vspace{-0.15cm} \textbf{Split}} & \multicolumn{5}{c}{\textbf{Precision}} & \multicolumn{5}{c}{\textbf{Recall}} & \multirow{2}{*}{\vspace{-0.1cm}\shortstack[c]{\bf Gold\\ \bf Rank}}\\

\cmidrule(lr){3-7} \cmidrule(lr){8-12}

 & & @1 & @5 & @20 & @100 & @R & @1 & @5 & @20 & @100 & @R & \\

\arrayrulecolor{black}\midrule 

\multirow{3}{*}{\textbf{C.2.1}} & \textit{train} & 4.35	 &	2.58	 &		1.50	 &		0.65 &	3.08 &	1.58 &		4.38	 &		9.27 &			18.89 &	3.08 &	22720 \\

& \textit{dev} & 3.89	 &	3.39	 &		1.82	 &		0.71 &	3.75 &	0.97 &		4.42 &			9.88	 &		17.98 &	3.75 &	17981 \\

& \textit{test} & 6.34	 &	3.36	 &		1.79 &			0.67 &	3.76 &	2.23	 &	6.52	 &		12.48	 &		20.99 &	3.76 &	19236 \\

\arrayrulecolor{black!50}\midrule 

\multirow{3}{*}{\textbf{C.3.1}} & \textit{train} & 2.81	&	1.74&			1.09		&	0.50&	1.93&	1.01	&	3.09	&		7.12	&		14.85&	1.93&	16526 \\
& \textit{dev} & 3.89	&	2.40	&		1.27		&	0.48&	2.47&	0.86	&	2.94	&		7.43	&		13.31&	2.47&	13786\\
& \textit{test} & 3.03	&	2.04	&		1.05&			0.50&	2.57&	1.00	&	3.52	&		6.75	&		16.21&	2.57&	14248\\

\arrayrulecolor{black!10}\midrule 

\multirow{3}{*}{\textbf{C.3.2}} & \textit{train} & 7.05	 &	4.50 &			2.41	 &		1.02 &	5.15 &	2.61	 &	7.77	 &		15.15	 &		29.12 &	5.15 &	5827 \\
& \textit{dev} & 2.46	 &	2.04	 &		1.56	 &		0.72 &	1.94 &	0.83	 &	3.70	 &		9.00 &			20.45 &	1.94 &	8806\\
& \textit{test} & 3.01	 &	2.30	 &		1.27	 &		0.60 &	2.96 &	1.14	 &	4.55 &			9.32 &			20.07 &	2.96 &	7478\\

\arrayrulecolor{black!10}\midrule 

\multirow{3}{*}{\textbf{C.3.3}} & \textit{train} & 8.21	 &	4.46	 &		2.43	 &		1.02 &	5.74 &	3.19 &		7.77	 &		15.47 &			29.30 &	5.74 &	5169 \\
& \textit{dev} & 5.26	 &	3.72	 &		2.14	 &		0.92 &	3.82 &	1.46 &		5.49 &			12.26	 &		25.47 &	3.82 &	6509 \\
& \textit{test} & 4.96	 &	3.97	 &		2.00 &			0.85 &	3.80 &	1.69	 &	7.41	 &		13.79	 &		26.02 &	3.80 &	6126 \\

\arrayrulecolor{black!50}\midrule

\multirow{3}{*}{\textbf{C.3.4}} & \textit{train} & 3.96	&	2.57&			1.60	&		0.79&	3.11&	1.53	&	4.55	&		10.14&			22.78&	3.11&	6071 \\	
& \textit{dev} & 2.46	&	2.11	&		1.51	&		0.75&	2.00&	0.64	&	3.44	&		8.92	&		21.85&	2.00&	6208 \\
& \textit{test} & 3.01	&	1.70	&		1.21&			0.64&	1.71&	0.95	&	2.82	&		8.02&			19.22&	1.71&	5995 \\	

\arrayrulecolor{black!50}\midrule

\multirow{3}{*}{\textbf{C.3.5}} & \textit{train} & 4.72	&	3.06&			1.87	&		0.85&	3.59&	1.64&		5.21	&		11.31&			24.04&	3.59&	6315 \\
& \textit{dev} & 4.56	&	2.25	&		1.58&			0.72&	3.23&	1.79&		3.57	&		9.89	&		20.12&	3.23&	8509 \\
& \textit{test} & 3.29	&	1.97	&		1.26	&		0.63&	2.65 &	1.35&		3.40	&		7.95	&		18.84&	2.65&	8539 \\

\arrayrulecolor{black!10}\midrule 

\multirow{3}{*}{\textbf{C.3.6}} & \textit{train} & 4.29	&	2.68&			1.64	&		0.78&	3.01&	1.65&		4.90	&		10.43	&		22.88&	3.01	&7925 \\
& \textit{dev} & 2.81	&	2.46	&		1.51	&		0.73&	2.20&	0.64&		3.45	&		8.25	&		20.21&	2.20&	8900 \\
& \textit{test} & 2.47	&	1.75	&		1.14	&		0.62&	2.23&	1.14	&	3.24	&		6.98	&		18.49&	2.23&	8383 \\

\arrayrulecolor{black}\bottomrule 

\end{tabular}
}
\end{center}

\caption{\label{tab:results_ft_clip}Results of experiment using Fine-tuned CLIP. Metrics shown here follow same format as in \Cref{tab:results_zs_clip}. The first multirow shows the zero-shot CLIP results from Exp C.2.1 and serves as a comparison baseline.}

\end{table}

Based on our experimentation, using results on the test split, we determine that the gold-mean rank can be improved through the following means: (a) fine-tuning CLIP, as it outperforms zero-shot; (b) concatenating concepts to provide more contextual information, rather than using each concept as a separate text-query; (c) pre-finetuning on image-caption from the Wikipedia images; (d) fine-tuning all layers of CLIP, as opposed to only fine-tuning the last few layers.
Ultimately, we select the model from Exp C.3.3 as our retrieval model, even though Exp C.3.5 achieved a slightly better gold-mean rank.
This is because Exp C.3.5 relies on Wikipedia image caption pairs, which may not be readily available for all web images.

\end{document}